\documentclass{article}

\usepackage{microtype}
\usepackage{graphicx}
\usepackage{float}
\usepackage{subcaption}
\usepackage{booktabs}
\usepackage{hyperref}

\usepackage[accepted]{icml2026}


\usepackage{amsmath}
\usepackage{amssymb}
\usepackage{mathtools}
\usepackage{amsthm}
\usepackage{enumitem}

\usepackage[capitalize,noabbrev]{cleveref}

\theoremstyle{plain}
\newtheorem{theorem}{Theorem}[section]
\newtheorem{proposition}[theorem]{Proposition}

\theoremstyle{definition}

\theoremstyle{remark}

\newcommand{\reals}{\mathbb{R}}

\newcommand{\choice}{\texttt{Classical.choice}}
\newcommand{\propext}{\texttt{propext}}
\newcommand{\quotsound}{\texttt{Quot.sound}}

\icmltitlerunning{Geometric Measurements of the Axiom of Choice in Neural Proof Embeddings}

\begin{document}

\twocolumn[
  \icmltitle{
      Geometric Measurements of the Axiom of Choice in Neural Proof Embeddings}

  \icmlsetsymbol{equal}{*}

  \begin{icmlauthorlist}
    \icmlauthor{Rodrigo Mendoza-Smith}{anon}
  \end{icmlauthorlist}

  \icmlaffiliation{anon}{Independent Researcher}

  \icmlcorrespondingauthor{Rodrigo Mendoza-Smith}{rms@isotropic.sh}

  \icmlkeywords{theorem proving, formalized mathematics, Lean, Mathlib,
    axiom of choice, proof embeddings, neural theorem proving, out-of-distribution
    generalization}

  \vskip 0.3in
]

\printAffiliationsAndNotice{}

\begin{abstract}
The axiom of choice has divided the foundations of mathematics for over a century, but the distinction between classical and constructive proofs has remained a philosophical and methodological one. 
We use Lean~4's kernel-level tracking of axiom dependence to show that the axiom of choice has a measurable geometric correlate in proof space that obeys a one-parameter mixture law and has operational consequences for neural theorem provers.
To do this, we partition $471{,}260$ declarations of Mathlib by transitive dependence on the axiom of choice and represent a filtered population of $42{,}355$ traced theorems by their sequences of tactic invocations.
We use the constructive proofs in this dataset to train a self-supervised proof encoder and show that when using it to measure classical proofs, three complementary measurements~(anomaly score, reconstruction loss, and density-superlevel containment) exhibit a common decline with the proof's distance from the axiom in the dependency graph, from sharp separation at the shallow boundary (AUC $0.847$ at distance~$2$) to indistinguishability at distance~$9{+}$. 
Robustness controls show that the signature survives length, file, author, and topic controls, and replicates under full-source encoders trained on normalised proof source.
Operationally, we show that on an evaluation sample of $251$ Mathlib theorems, Lean's \texttt{aesop} tactic solves constructive theorems at $13\times$ the rate of classical ones, and a neural-guided hybrid using the ReProver tactic generator compresses the gap to $5\times$.
The geometric anomaly score predicts \texttt{aesop} failure beyond proof length, providing an operational link between the geometric signature and prover performance.
\end{abstract}


The axiom of choice has divided the foundations of mathematics for more than a century.
Since Zermelo's formulation \citep{zermelo1908investigations}, constructive mathematicians from Brouwer to \citet{bishop1967foundations} have held that a proof of existence ought to produce a witness, while classical mathematicians have built with the non-constructive shortcuts that the axiom permits.
The disagreement has remained philosophical and methodological, and the question of whether these two dogmas differ structurally in ways that go beyond mere axiom dependence has eluded mathematical practice itself.
In this paper, we argue that the axiom of choice can be studied geometrically and that its correlates are operationally consequential for neural theorem provers.

Recent progress in machine learning for mathematical reasoning \citep{polu2022formal, yang2023leandojo, hubert2025olympiad} has driven rapid growth in interactive theorem provers, of which Lean~4 \citep{demoura2021lean} and its library Mathlib \citep{mathlib2020lean} are now the largest single instance.
In particular, Mathlib contains $471{,}260$ machine-certified declarations spanning algebra, analysis, topology, number theory, and other subfields.
These declarations are written in a single dependent type theory and verified by Lean's kernel.
Each of these declarations can be represented geometrically through their embeddings computed from self-supervised methods, or structurally through their relationship to other mathematical objects in the library.
This makes Mathlib an unusually rich dataset for the study of meta-mathematics where objects are naturally equipped with both a geometric and ontological representation.
We exploit this dual structure to ask whether the axiom of choice has an empirical signature in the proofs themselves.
To do this, we use the Lean kernel to track axiom dependence across Mathlib and compute a stratification of the library by their {\em depth} or distance from the axiom of choice, and a partition of its dependency graph into a set of {\em classical proofs} that transitively depend on the axiom of choice, and a set of {\em constructive proofs} that do not.

We use this data to train a self-supervised denoising encoder on tactic sequences from the constructive side of the partition and use it to project Mathlib's classical population into its embedding space.
Three mathematically distinct measurements~(a $k$-nearest-neighbour anomaly score against constructive embeddings, the encoder's own masked-token reconstruction loss, and a density-superlevel containment fraction inside the constructive density's superlevel sets) each separate classical from constructive proofs sharply at the shallow boundary ($k$-NN AUC $0.847$, reconstruction-loss excess $34.5\%$, containment fraction $43\%$ outside the constructive $90\%$ region) and decay to the constructive baseline by depth nine.

We model these {\em depth gradients} as arising from a mixture of two populations at each depth $d$: a constructive proof distribution $P$, and a distribution $R$ of proofs that directly invoke the missing classical capability.
Further, we argue that if we let $\lambda_d \in [0, 1]$ be the fraction of directly-classical mass at depth $d$, the depth-$d$ proof distribution is
\begin{equation}
    \label{eq:mixture}
  Q_d \;=\; (1 - \lambda_d)\, P \;+\; \lambda_d\, R,
\end{equation}
with $\lambda_2 \geq \lambda_3 \geq \cdots \geq 0$ reflecting that direct invocations of the axiom concentrate at shallow depths.
The depth law that emerges from the experiments is therefore not three separate empirical findings but a single one-parameter law over distance, in which this foundational distinction is summarised by one mixing weight per dependency-distance bucket.

These geometric findings have direct consequences for neural theorem provers, which learn to compose tactics by training on libraries such as Mathlib.
On a held-out sample of $251$ Mathlib theorems, Lean's \texttt{aesop} automation \citep{limperg2023aesop}, a symbolic best-first search over a curated rule set, solves $20\%$ of constructive theorems and only $1.5\%$ of classical ones, a gap of roughly $13\times$.
Moreover, pairing \texttt{aesop} with the ReProver tactic generator \citep{yang2023leandojo} compresses the gap to roughly $5\times$ but does not close it.
The geometric anomaly score from the constructive encoder predicts prover failure beyond what proof length alone explains, which suggests that the depth law in proof-embedding geometry is operationally relevant for neural theorem proving.

\begin{figure*}[!t]
  \centering
  \includegraphics[width=0.96\textwidth]{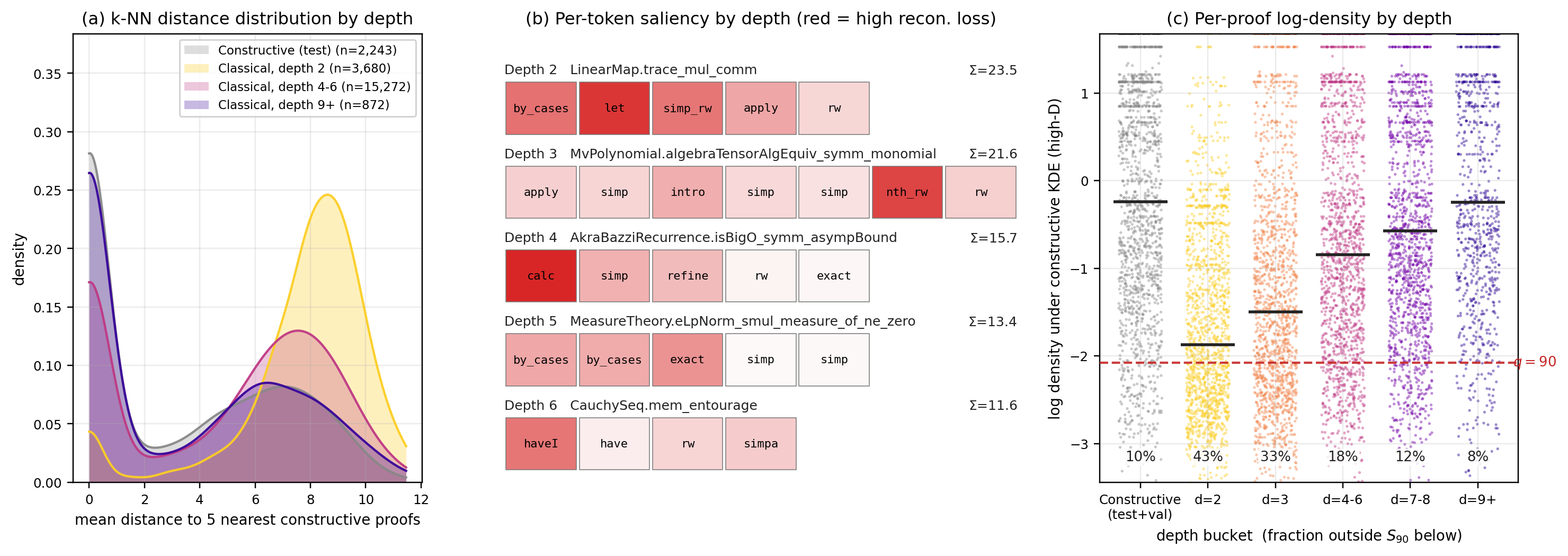}
  \caption{\textbf{The depth law in three views.} (a)
  \emph{$k$-NN distance distribution by depth}: histograms of mean
  Euclidean distance to the $5$ nearest constructive-training proofs
  (constructive-standardised encoder embeddings; \Cref{app:models}).
  (b) \emph{Per-token saliency}: one representative classical proof per
  depth bucket; cell colour intensity is the single-token reconstruction
  loss under the constructive encoder, and $\Sigma$ is the summed
  per-token loss.
  (c) \emph{Per-proof log-density by depth}: each dot is one proof,
  $y$ is its log-density under the high-D constructive KDE that
  defines the $S_{90}$ superlevel set, $x$ is its depth bucket.
  Black ticks are bucket medians; the red dashed line is the q=90
  threshold; the percentage below each column is the fraction of
  proofs in that bucket falling outside $S_{90}$.}
  \label{fig:hero}
\end{figure*}

\paragraph{Related work.}
Machine learning for interactive theorem proving has moved from tactic prediction in proof assistants \citep{huang2018gamepad,yang2019learning} to language-model and expert-iteration provers \citep{polu2020generative,han2021proof,polu2022formal}, retrieval and premise selection systems \citep{yang2023leandojo,mikula2024magnushammer}, and reinforcement-learning provers such as AlphaProof \citep{hubert2025olympiad}.
These works primarily aim to improve proof search or to evaluate generalisation across formal libraries; for example, LeanDojo's novel-premises split tests whether a prover can use premises not seen during training.
Our question is orthogonal: we use Lean's kernel-level dependence on \choice{} to define a proof-structural axis, and ask whether that axis is visible in the geometry of a representation trained only on proofs that don't depend on \choice{}.
Methodologically, this places our measurements near one-class and out-of-distribution detection \citep{scholkopf2001estimating,hendrycks2016baseline,liang2017enhancing}, but with the held-out population defined by logical dependence rather than dataset origin.
More loosely, the paper is motivated by proof complexity's study of how proof systems and inference principles affect proof length \citep{cook1979relative,krajicek1995bounded,razborov2003propositional}, by the classical/constructive fault line around choice \citep{zermelo1904wellordering,bishop1967foundations}, and by the older theme that formal symbol systems can exhibit global structure not apparent from their local rules \citep{hofstadter1979geb}.

\section{Setup}
\label{sec:setup}

\paragraph{A partition on Mathlib. }The Lean \emph{kernel} is the type-checker for a fixed dependent type theory \citep{demoura2021lean}.
It accepts as input a term and a type, both expressed in the kernel's internal syntax, and either certifies that the term inhabits the type or rejects the input.
More precisely, it accepts a proof if and only if every step reduces to a finite chain of inferences that bottom out in a fixed list of foundational axioms.
Lean's kernel tracks dependence on three foundational axioms: (i) \propext{}, which asserts that two propositions are equal if they are logically equivalent; (ii) \quotsound{}, which asserts that if two elements are related by an equivalence relation, their equivalence classes are equal; and (iii) \choice{}, which asserts that for any nonempty type, an element can be extracted.
We use this mechanism to partition Mathlib's $471{,}260$ declarations.
Performing breadth-first search through the dependency graph, in which each edge $u \to v$ records that declaration $u$'s proof invokes declaration $v$, we partition Mathlib into a set of \emph{classical} proofs that transitively depend on \choice{} and a set of \emph{constructive} proofs that do not.
Here and throughout, ``classical'' and ``constructive'' refer to the recorded Mathlib proof, not to whether the theorem statement admits an alternative proof without \choice{}.
The dependency graph is acyclic because proofs can only invoke previously-defined declarations, so the dependency chain from any classical declaration back to \choice{} has a well-defined shortest length.
We use this length as a distance metric: the \emph{distance} of a classical declaration from \choice{} is the length of the shortest dependency chain from that declaration to \choice{} itself.

The full kernel dependency graph contains $471{,}260$ Mathlib declarations, of which $171{,}522$ are classical and $299{,}738$ are constructive.
For our learned proof-representation experiments, we restrict to LeanDojo-traced theorems \citep{yang2023leandojo} having between $2$ and $200$ tactic invocations each with an extractable leading tactic head.
This leaves a final proof-embedding population of $42{,}355$ theorems: $31{,}144$ classical and $11{,}211$ constructive, with median tactic-trace length $4$.
The full chain is reported in \Cref{tab:counts}.

\subsection{Proof Representations}
\label{sec:setup:representations}

Every proof in our population is recorded as a sequence of tactic invocations.
We summarise each invocation by its \emph{tactic head}, the first identifier of the call (for example, \texttt{intro}, \texttt{rw}, or \texttt{simp}), and represent the proof as the resulting sequence of heads.
This is a coarse but informative view: the head says which kind of move was applied at each step, even if it discards the specific arguments.
Constructive tactic sequences in our population have median length $7$ and $95$th percentile $25$, which fit comfortably within a fixed window of $64$ steps.
The vocabulary contains $219$ distinct heads that occur at least five times in constructive proofs, together with a handful of special tokens.

\begin{figure*}[!t]
  \centering
  \includegraphics[width=0.98\textwidth]{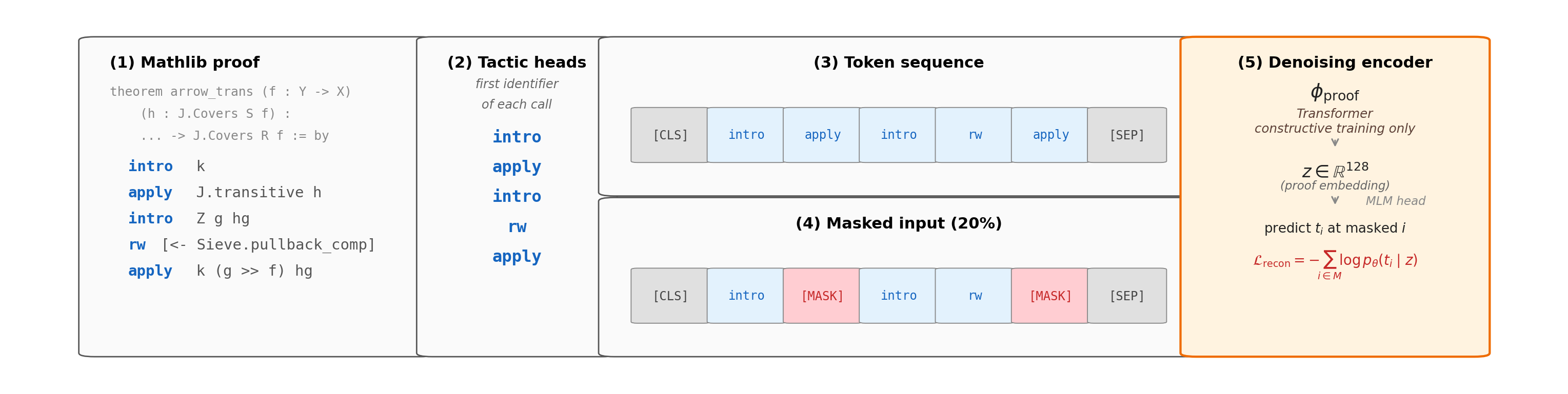}
  \caption{\textbf{Proof representation and denoising pipeline.} Five stages, left to right, illustrated on a real Mathlib proof of \texttt{GrothendieckTopology.arrow\_trans} (\texttt{<-} and \texttt{>>} in the body stand for the Lean unicode \texttt{$\leftarrow$} and \texttt{$\gg$}).
(1) A tactic-mode proof body, with each line a tactic invocation and the leading identifier in bold blue.
(2) The \emph{tactic head} of each invocation is its first identifier, e.g.\ \texttt{rw} for \texttt{rw [<- Sieve.pullback\_comp]}; we keep this and discard the arguments.
(3) The proof becomes an ordered sequence of heads bracketed by \texttt{[CLS]} and \texttt{[SEP]}.
(4) During training, $20\%$ of head tokens are replaced by \texttt{[MASK]} (red), with special tokens never masked.
(5) A Transformer denoising encoder $\phi_{\rm proof}$, trained only on constructive proofs, produces a $128$-dimensional embedding from the masked sequence, and a paired decoder predicts the masked tokens.
All downstream measurements use this same frozen encoder.}
  \label{fig:proof_representation}
\end{figure*}

To turn these sequences into a geometry, we train a \emph{denoising autoencoder} on constructive proofs alone.
A denoising autoencoder is a sequence-to-sequence model that learns to reconstruct its input from a corrupted version of itself, in the same spirit as the masked language modelling objective used to pretrain BERT \citep{devlin2019bert}.
Our model is a single encoder--decoder Transformer trained end-to-end: a $4$-layer encoder compresses the corrupted sequence into a sequence of hidden states in $\reals^{128}$, and a $2$-layer decoder with a linear vocabulary head predicts the masked tokens from that representation.
At training time, for each tactic-head sequence $x = (x_1, \ldots, x_n)$, we sample a random subset $M \subset \{1, \ldots, n\}$ containing $20\%$ of the non-special positions, replace $x_i$ with a \texttt{[MASK]} token for each $i \in M$ to form a corrupted input $\tilde{x}$, and minimise the cross-entropy of the model's predictions $\hat{p}(\,\cdot\, \mid \tilde{x})$ on the masked positions:
\[
  \mathcal{L}(x)
  \;=\;
  -\frac{1}{|M|}\sum_{i \in M}
  \log \hat{p}(x_i \mid \tilde{x}).
\]
Reconstruction quality improves only by representing the training distribution well, so the encoder learns features that explain how constructive proofs are typically composed without ever being told which proofs are classical. 
At inference time we feed the uncorrupted sequence $x$ to the encoder and take the mean of its hidden states over all non-padding positions,
\[
  \phi_{\rm proof}(x)
  \;=\;
  \frac{1}{|S|}
  \sum_{ i \in S} h_i(x)
  \;\in\; \reals^{128},
\]
where $S = \{i : x_i \neq \texttt{[PAD]}\}$ and $h_i(x)$ is the $i$-th encoder hidden state; \texttt{[CLS]} and \texttt{[SEP]} positions are retained in the pool while \texttt{[PAD]} positions are excluded.
We refer to $\phi_{\rm proof}(x)$ as the \emph{proof embedding} (\Cref{fig:proof_representation}).
Before all downstream analyses we $L^2$-normalise each embedding, so the saved representations live on the unit sphere $\mathbb{S}^{127} \subset \reals^{128}$; full downstream detector preprocessing is described in \Cref{app:models}.
The decoder and vocabulary head are used only at training time and discarded for all downstream analyses; only $\phi_{\rm proof}$ enters the boundary measurements that follow.
More generally, the encoder is \emph{label-free}: no information about the classical/constructive partition influences its training data, vocabulary, or hyperparameters.
Therefore, when classical proofs enter the analysis, they enter as a held-out population to be measured, never as training signal or as a selection criterion.
Details about our label-free methodology and our architectural and training decisions are given in \Cref{app:models}.


\section{The Depth Law}
\label{sec:law}

We begin with the aggregate question: how well can an anomaly score distinguish classical from constructive proofs?
To test this, we fit a $k$-NN ($k=5$) detector on a sample of $8{,}968$ constructive training proofs and compute an \emph{anomaly score} on a held-out test split of $1{,}122$ constructive-test proofs and a fixed random subsample of $5{,}000$ classical proofs drawn from the $31{,}144$ classical theorem proof-embedding population.
Proofs near the constructive cluster receive low scores, while proofs far from it receive high scores.
We summarise the score's ability to distinguish two populations by the area under the receiver operating characteristic curve (AUC).
The detector reaches AUC $0.672$. Length residualisation (see \Cref{app:models}) reaches $0.675$ AUC.  
This suggests that the geometric signal of axiom dependence is real but, at the aggregate level, moderate.
\begin{table}[h!]
\centering
\caption{$k$-NN ($k{=}5$), Gaussian KDE ($h = 0.3$, $5$-fold CV-selected),
and Isolation Forest, on Lean proof embeddings, stratified by distance to \choice{}.
The per-depth rows use the full available population in each displayed bucket.
The two aggregate rows use the same fixed $5{,}000$-classical subsample used in \Cref{tab:lean_support_full}. The $30$ direct depth-$1$ uses of \choice{} are omitted from the per-depth rows.}
\label{tab:depth_strat}
\small
\begin{tabular}{lrccc}
\toprule
\textbf{Bucket} & $n$ & $k$-NN & KDE & IsoForest \\
\midrule
depth $2$     & $3{,}680$ & $\mathbf{0.847}$ & $\mathbf{0.801}$ & $0.782$  \\
depth $3$     & $8{,}686$ & $0.736$          & $0.710$          & $0.699$     \\
depth $4$     & $8{,}299$ & $0.648$          & $0.636$          & $0.607$     \\
depth $5$     & $4{,}393$ & $0.598$          & $0.593$          & $0.562$     \\
depth $6$     & $2{,}580$ & $0.600$          & $0.594$          & $0.565$     \\
depth $7{-}8$ & $2{,}604$ & $0.576$          & $0.567$          & $0.533$     \\
depth $9{+}$  & $872$     & $0.507$          & $0.495$          & $0.463$     \\
\midrule
\textbf{Aggregate AUC} &     &          &           &      \\
\midrule
Raw          & $5{,}000$ & $0.672$ & $0.653$ & $0.636$ \\
Length residualised & $5{,}000$ & $0.675$ & $0.661$ & $0.628$ \\
\bottomrule
\end{tabular}
\end{table}

This aggregate hides a sharper phenomenon: not all classical theorems are equally classical.
A proof one hop from \choice{} invokes the axiom through a single classical tactic, while a proof ten hops away is classical only because some lemma in its derivation eventually invokes one that depends on the axiom.
For the depth-conditional rows, \Cref{tab:depth_strat} uses all $31{,}114$ classical proof embeddings at depth $2$ or greater and reports the AUC for $k$-NN, KDE \citep{silverman1986density}, and Isolation Forest \citep{liu2008isolation} on the encoder embeddings.
At depth $2$ all detectors separate classical from constructive proofs with AUC between $0.78$ and $0.847$.
Separation declines with depth and at depth $9{+}$, every detector is at or near chance.
The shape is the same across methods and a sliced Wasserstein test (\Cref{sec:robustness}) reproduces the gradient at the level of distributions rather than detection scores.
\Cref{tab:depth_strat} suggests that the right object of study is the gradient, which we name the \emph{depth law}: separation from the constructive distribution decays with distance from \choice{}, with a similar trend across detectors.

The agreement between $k$-NN, KDE, and Isolation Forest shows that the depth gradient is not tied to a particular one-class detector. 
But all three are still external scores placed on top of the frozen embedding space. 
To test whether the same ordering is visible in the learned proof distribution itself, we add two complementary measurements.
Reconstruction loss returns to the denoising objective: a proof is atypical if its masked tactic tokens are hard for the constructive-trained model to predict.
Superlevel containment instead asks a geometric question: whether the proof lies inside the high-density region occupied by constructive embeddings.
Thus the next two tests measure the same depth gradient from two different angles, predictive fit and typical-set membership.

\subsection{Measurement \#2: Reconstruction Loss}
\label{sec:reconstruction_loss}

Recall that $\phi_{\rm proof}$ was trained to recover masked tactic tokens in constructive proofs, with no classical labels involved (\Cref{sec:setup:representations}).
If classical proofs lie outside the constructive training distribution, the same frozen model should incur higher cross-entropy on them.
For each held-out proof $x$ we draw ten independent random masks $M^{(1)}, \ldots, M^{(10)}$ hiding $20\%$ of the non-special tokens, and average per-token cross-entropy on the masked positions:
\[
  \ell(x)
  \;=\;
  \frac{1}{10}\sum_{r=1}^{10}\;
  \frac{1}{|M^{(r)}|}\sum_{i \in M^{(r)}}
  -\log \hat{p}\bigl(x_i \mid \tilde{x}^{(r)}\bigr),
\]
where $\tilde{x}^{(r)}$ is the corruption of $x$ under the $r$-th mask.
The excess loss $\ell(x) - \mathbb{E}_{x \sim \mathrm{constructive\ test}}[\ell(x)]$ is a quantitative generalisation gap in the units the model was trained in, with no threshold step.
\Cref{fig:reconstruction_loss} shows that held-out constructive proofs incur a baseline of $2.47$ nats per masked token, while the shallow classical reconstruction bucket ($d \leq 2$) incurs $3.32$ nats ($+34.5\%$ with Mann-Whitney one-sided $p \approx 3 \times 10^{-101}$).
The excess attenuates with depth and is at the constructive baseline by depth $9{+}$, mirroring the overall AUC gradient.
Residualising against a length-based prediction fit on constructive training proofs preserves the gradient in full.
\begin{figure}[ht]
  \centering
  \includegraphics[width=0.98\columnwidth]{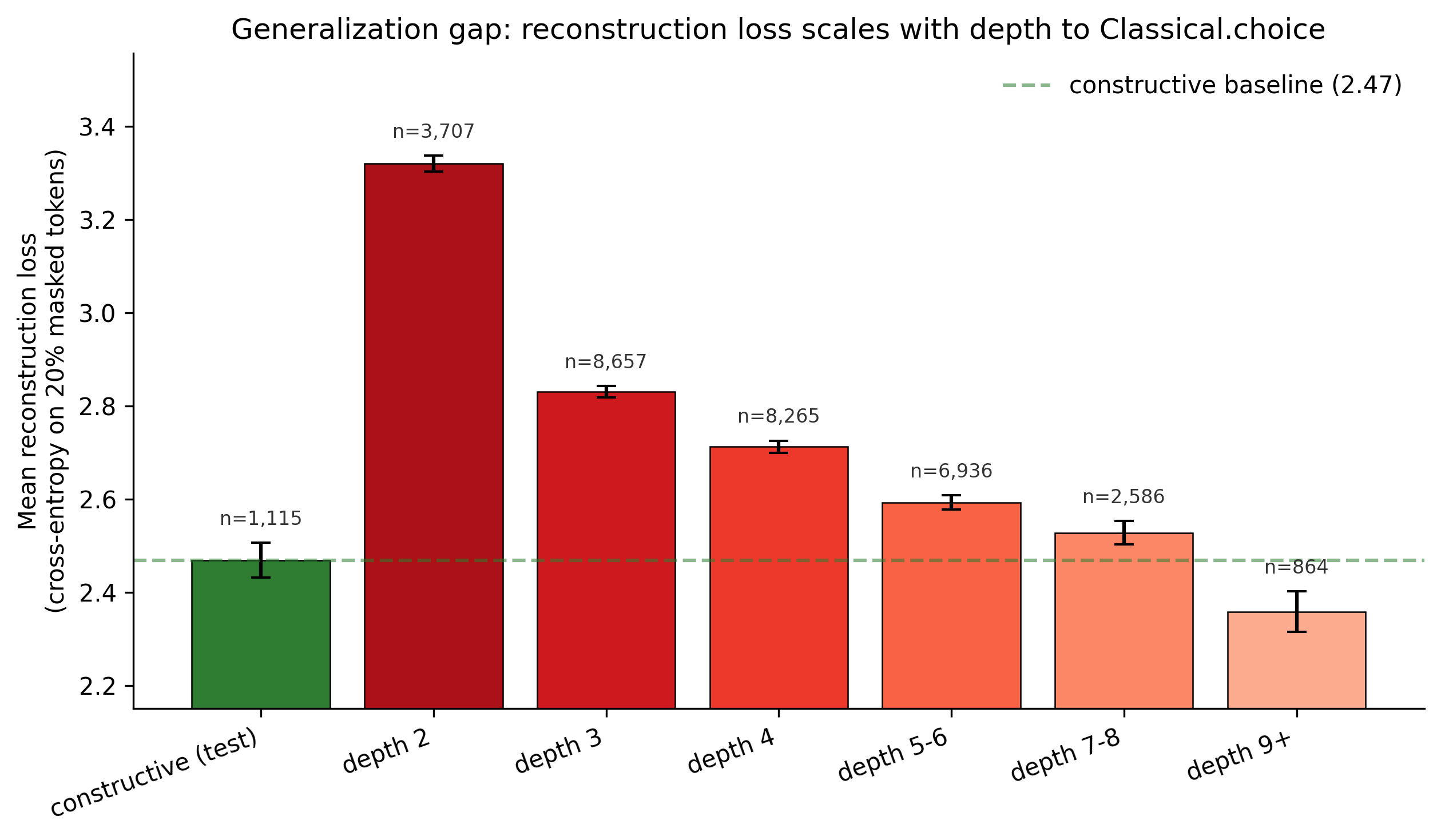}
    \caption{Mean reconstruction loss by depth from \choice{}.
  Green bar: held-out constructive test proofs.
  Red bars: classical proofs stratified by depth; the first bar pools
  the $27$ maskable depth-$1$ proofs with the $3{,}680$ depth-$2$
  proofs. Error bars show SEM.}
  \label{fig:reconstruction_loss}
\end{figure}

\subsection{Measurement \#3: Density-Superlevel Containment}
\label{sec:topological_containment}

The third measurement stops asking for a ranking score and instead asks
whether a proof lies inside the high-density region occupied by
constructive proofs.
We estimate the constructive density $\widehat{p}_{\rm con}$ with a
Gaussian kernel density estimator fit on constructive training
embeddings, and define the $q$-th superlevel set
\[
  S_q = \{x : \widehat{p}_{\rm con}(x) \geq t_q\}
\]
as the region containing the densest $q\%$ of held-out constructive
proofs.
The threshold $t_q$ is calibrated on held-out constructive proofs, so
$10\%$ of constructive proofs fall outside $S_{90}$ by construction.
A classical population that matches the constructive support should
therefore have an outside-$S_{90}$ rate near $10\%$; a population lying
in the constructive periphery should have a substantially higher rate.

\Cref{fig:hero}(c) plots the full log-density distribution under this
constructive KDE.
Each dot is one proof, the red dashed line is the $S_{90}$ threshold,
black ticks mark bucket medians, and the percentage below each column is
the fraction of proofs outside $S_{90}$.
At depth $2$, $43\%$ of classical proofs fall outside the constructive
$S_{90}$ region, almost five times the constructive baseline.
The outside rate then declines through the depth buckets: $33\%$ at
depth $3$, $18\%$ at depths $4$--$6$, $12\%$ at depths $7$--$8$, and
$8\%$ at depth $9{+}$.
Thus direct uses of \choice{} occupy the low-density periphery of the
constructive proof distribution, while remote transitive dependents move
back into the constructive high-density region.
The full $S_{80}$, $S_{90}$, and $S_{95}$ containment sweep is reported
in \Cref{app:containment_sweep}.

\paragraph{One law to rule them all.}
The three measurements above are not independent phenomena.
They are what one would expect if each depth bucket interpolates between a constructive-like proof population and a directly-classical frontier, as described in \Cref{eq:mixture}.
The following proposition formalises this.

\begin{proposition}[Mixture law for depth-stratified measurements]
\label{prop:mixture}
Assume the depth mixture model of \Cref{eq:mixture}.
For any fixed integrable measurement function $g$,
\[
  \mathbb{E}_{x \sim Q_d}[g(x)]
  =
  (1-\lambda_d)\mathbb{E}_{x \sim P}[g(x)]
  +
  \lambda_d\mathbb{E}_{x \sim R}[g(x)] .
\]
Consequently, reconstruction loss, membership in any fixed density
superlevel set, and AUC of any fixed score against $P$ are affine
functions of the same scalar $\lambda_d$.
For AUC, take $g(y)$ to be the probability that $y$ outranks an
independent constructive proof under the score, using the standard
half-credit convention for ties.
\end{proposition}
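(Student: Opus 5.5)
The identity is linearity of integration with respect to the measure, so the plan is to prove that first and then, for each of the three measurements, exhibit the fixed function $g$ that it is the expectation of.

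\emph{Step 1 (the identity).} By \Cref{eq:mixture}, $Q_d$ is the convex combination $(1-\lambda_d)P+\lambda_d R$ of probability measures on the same measurable space. For an indicator $g=\mathbf{1}_A$ the identity is exactly the definition $Q_d(A)=(1-\lambda_d)P(A)+\lambda_d R(A)$. It extends to simple functions by linearity. It then extends to nonnegative measurable $g$ by monotone convergence applied separately under $P$, $R$ and $Q_d$. Finally, for integrable $g$ split $g=g^+-g^-$. Integrability under $Q_d$ is equivalent to integrability under both $P$ and $R$ when $0<\lambda_d<1$, and is trivial at the endpoints. Stated this way, the hypothesis ``integrable'' should mean integrable under $P$ and $R$.

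\emph{Step 2 (specialisations).} For reconstruction loss, take $g(x)=\ell(x)$. This must be the expected per-token loss over the random masks, a fixed function of $x$ because the encoder is frozen, rather than one realisation of the ten masks. Its $Q_d$-mean is then affine in $\lambda_d$. For containment, take $g=\mathbf{1}_{S_q}$ with $S_q$ fixed. This requires $t_q$ to be calibrated on the constructive population only, so it does not depend on $d$; the outside rate $1-Q_d(S_q)$ is then affine. For AUC, let $s$ be the fixed score and define
\[
  g(y)=\Pr_{X\sim P}\bigl[s(y)>s(X)\bigr]+\tfrac12\Pr_{X\sim P}\bigl[s(y)=s(X)\bigr].
\]
By Fubini, with $Y\sim Q_d$ independent of $X\sim P$,
\[
  \mathrm{AUC}(Q_d,P)=\Pr[s(Y)>s(X)]+\tfrac12\Pr[s(Y)=s(X)]=\mathbb{E}_{Y\sim Q_d}[g(Y)].
\]
Step 1 then gives $\mathrm{AUC}_d=(1-\lambda_d)\cdot\tfrac12+\lambda_d\,\mathrm{AUC}(R,P)$, since $\mathrm{AUC}(P,P)=\tfrac12$ by symmetry under the tie convention. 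All three quantities are therefore affine in the same scalar $\lambda_d$, with measurement-specific intercept and slope.

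\emph{Main obstacle.} The only subtle point is that ``fixed'' carries the whole argument. The score $s$, the superlevel threshold $t_q$, the encoder, and the constructive reference distribution $P$ in the AUC must all be independent of $d$ and of $\lambda_d$. Otherwise $g$ varies with $d$ and affinity fails. I would state this explicitly as part of the hypotheses: everything is fit on constructive data only, which is what the label-free setup of \Cref{sec:setup:representations} provides. I would also note that empirical finite-sample versions satisfy the law only in expectation.
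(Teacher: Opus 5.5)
Your proof is correct and follows essentially the paper's route: the identity is linearity of expectation under the convex combination $(1-\lambda_d)P+\lambda_d R$. The three consequences then come from substituting the masked-token cross-entropy, the (outside-)$S_{90}$ indicator, and the half-credit ranking functional against $P$, with $\mathrm{AUC}(P,P)=\tfrac12$ as the intercept. The paper stops at that level, so your extension from indicators to integrable functions, the Fubini step for AUC, and your explicit requirement that the encoder, score, threshold $t_q$ and mask-averaged loss not depend on $d$ add rigour rather than a different approach.
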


For our three measurements, $g$ is respectively the masked-token
cross-entropy, the indicator of falling outside the constructive
$S_{90}$ region, and the ranking functional induced by the $k$-NN
anomaly score.
Thus the proposition predicts that, after calibration, the reconstruction
loss, density-containment, and AUC curves should recover the same
depth-dependent coordinate.

We test this prediction in two steps.
First, we put the three raw measurements on a common boundary-strength
axis.
For each metric, the held-out constructive baseline is set to $0$.
For AUC and containment, the strict depth-$2$ value is set to $1$;
for reconstruction, the original shallow bucket $d \leq 2$ is set to
$1$.
These anchors correspond to chance for AUC, the constructive mean
masked-token loss for reconstruction, and the $10\%$ outside-$S_{90}$
rate fixed by the density threshold for containment.
\Cref{fig:three_measurements} shows the result.
Only the two endpoints enter this calibration, so the shared descent
through the intermediate buckets is empirical rather than fit.
All three measurements recover the same depth ordering and collapse to
the constructive baseline by depth $9{+}$.

Second, we fit $\lambda_d$ from AUC alone and use the mixture law to
predict reconstruction loss and $S_{90}$ containment without using
either target measurement's depth-$3{+}$ observations.
Across the five non-anchor buckets, the resulting RMSE is $0.15$ nats
for reconstruction loss and $3.9$ percentage points for containment.
The implied weights and fit statistics are reported in
\Cref{sec:lambda-fit}.
At the level of these measurements, the depth gradient therefore obeys
a one-parameter mixture law: one scalar $\lambda_d$ per
dependency-distance bucket summarises the geometric, reconstructive,
and density-containment signatures of \choice{} dependence.

\begin{figure}[!t]
  \centering
  \includegraphics[width=0.9\columnwidth]{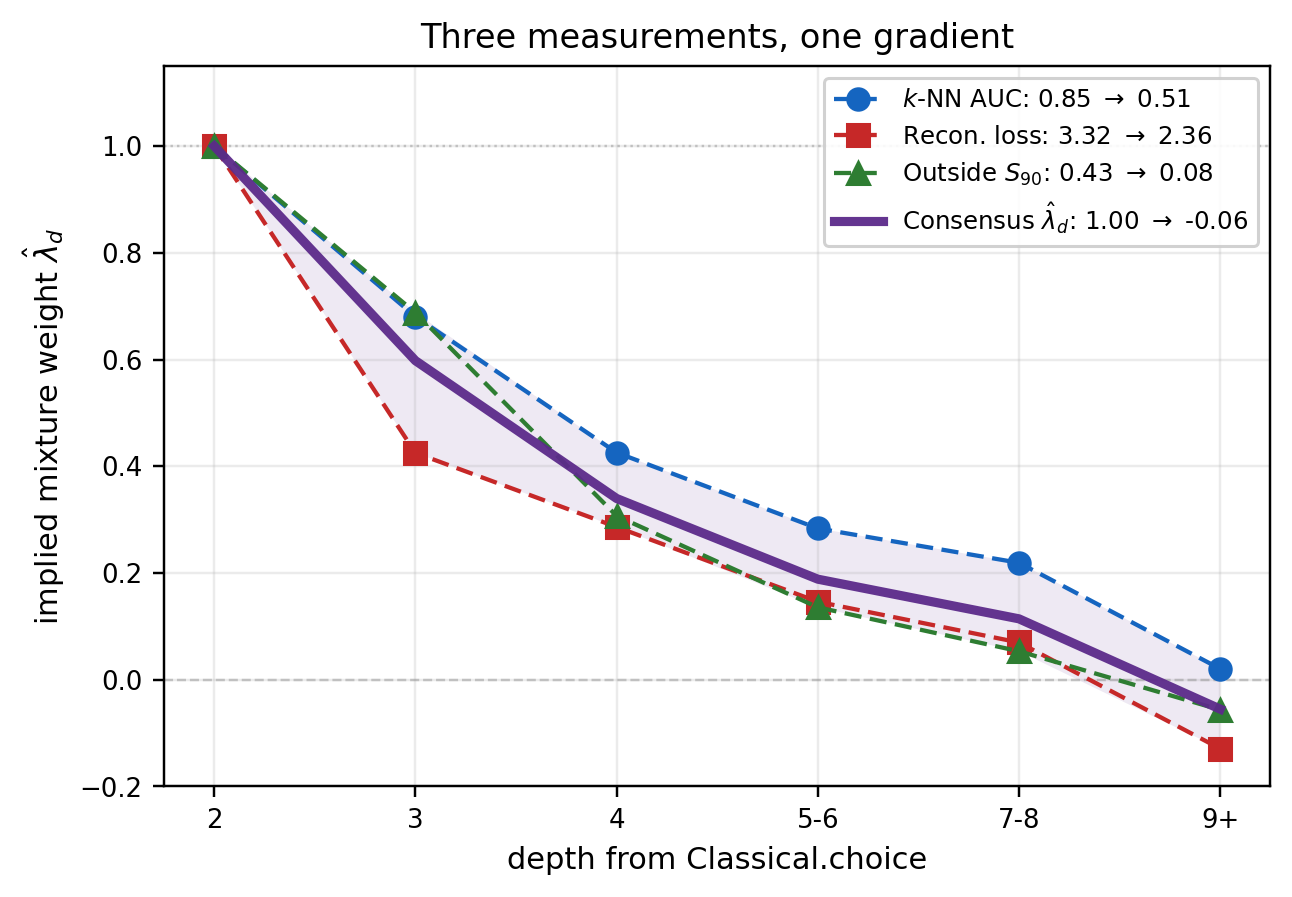}
    \caption{\textbf{Three measurements, one gradient.}
  Each metric is rescaled from the constructive baseline ($0$) to its
  shallow endpoint ($1$): strict depth $2$ for AUC and containment,
  and $d \leq 2$ for reconstruction. Intermediate buckets are not fit;
  the solid curve is the pointwise mean and the band its range. The
  separate $\lambda_d$ fit is reported in \Cref{sec:lambda-fit}.}
  \label{fig:three_measurements}
\end{figure}

\subsection{Statements Are Quiet; Proofs Are Not}
\label{sec:law:asymmetry}

The depth law could in principle be a statement-level effect rather
than a proof-level one: theorems whose proofs depend on \choice{} might
simply talk about different mathematical objects.  To test this, we
train a separate theorem-statement encoder on constructive theorem
statements only and apply the same $k$-NN protocol on theorems for which
both statement text and tactic traces are available; details are in
\Cref{app:models}.
\begin{table}[ht]
\centering
\caption{Depth-stratified $k$-NN AUC for statement and proof
embeddings on the common-coverage theorem set. The aggregate includes
the $27$ common-coverage depth-$1$ proofs, which are omitted from the
displayed depth rows.}
\label{tab:lean_statements}
\small
\begin{tabular}{lccc}
\toprule
\textbf{Bucket} & \textbf{n} & \textbf{Statement} & \textbf{Proof} \\
\midrule
depth $2$     & $3{,}556$ & $0.623$ & $\mathbf{0.847}$ \\
depth $3$     & $8{,}093$ & $0.616$ & $0.736$ \\
depth $4$     & $7{,}692$ & $0.612$ & $0.648$ \\
depth $5$     & $4{,}071$ & $0.584$ & $0.598$ \\
depth $6$     & $2{,}365$ & $0.582$ & $0.600$ \\
depth $7{-}8$ & $2{,}341$ & $0.580$ & $0.576$ \\
depth $9{+}$  & $788$     & $\mathbf{0.572}$ & $0.507$ \\
\midrule
Aggregate     & $28{,}933$ & $0.604$ & $0.675$ \\
\bottomrule
\end{tabular}
\end{table}

Statement embeddings do contain some information about the
classical/constructive partition: the aggregate AUC is $0.604$, and the
shallow buckets remain above chance.  But the signal is much weaker and
much flatter than in proofs.  Across depth, proof AUC falls from
$0.847$ to $0.507$, a swing of $0.34$, while statement AUC only experiences a swing of $0.05$. Thus theorem content carries a
modest topical signal, while the depth law itself is primarily a
proof-compositional signal.
At depth $9{+}$, where the proof trace is indistinguishable from constructive traces, the remaining separation is almost entirely statement-level.
Additional discussion can be found in \Cref{app:robustness}.

\section{Robustness}
\label{sec:robustness}

We now summarise the robustness controls. Full details are in \Cref{app:robustness}.

\paragraph{The signature is not keywords.}
Classical tactic markers (\texttt{by\_contra}, \texttt{by\_cases}, \texttt{classical}, \texttt{exfalso}, \texttt{push\_neg}, and related) appear in $67\%$ of depth-$2$ proofs and below $11\%$ at every deeper bucket.
Stripping them from the input and re-embedding with the frozen encoder drops depth-$2$ AUC from $0.847$ to $0.754$ and leaves every deeper bucket within $0.008$ of its original value (\Cref{tab:depth_ablation}, \Cref{fig:depth_ablation}).

\paragraph{The signature is not length.}
Length-residualising the encoder embeddings moves the aggregate $k$-NN, KDE, and Isolation Forest AUCs by $+0.003$, $+0.008$, and $-0.008$, respectively (\Cref{tab:lean_support_full}).
The reconstruction-loss gradient is unchanged under residualisation:
the $d \leq 2$ excess is $+0.65$ nats per masked token both raw and
residualised (\Cref{tab:reconstruction_loss}).

\paragraph{The signature is not explained by file style, author, or mathematical topic.}
Mixed-effects models with file or per-declaration \texttt{git blame} author
random intercepts, controlling for proof length and tactic statistics, yield
nearly identical classical coefficients ($+0.402$ across $5{,}386$ files;
$+0.416$ across $446$ authors).
Length-matched comparisons within file, within author, and between theorems
with closely matched statements (mean cosine $0.89$) all recover the same
shallow-to-deep attenuation.
Within-area $k$-NN over $52$ level-$2$ Mathlib subdomains likewise preserves
the boundary (median AUC $0.713$; $90\%$ above $0.6$;
\Cref{fig:confound_controls}).

\paragraph{The signature is not detector-specific.}
The sliced Wasserstein distance \citep{rabin2011wasserstein} between each depth bucket and a fixed constructive sample reaches the permutation $p$-floor at every depth, with $z$-scores declining from $65.2$ at depth $2$ to $14.5$ at depth $9{+}$ (\Cref{tab:depth_ot}).
The gradient is visible without a one-class detector.

\paragraph{The signature is not an artefact of tactic-head abstraction.}
The main encoder sees only tactic heads.
As a richer check, we trained three denoising Transformer encoders on normalised full proof source from constructive proofs only, using a BPE vocabulary learned on the constructive subset, and repeated the one-class tests.
These encoders reproduce the depth-$2$ result within $0.005$ AUC (full-source $k$-NN AUC $0.843$ after stripping classical-machinery tokens, against $0.847$ for the head-only encoder) and retain a nonzero signal at depth $9{+}$ ($k$-NN AUC $0.66$) that within-domain analyses suggest partly reflects topic and identifier content rather than proof composition.
Therefore, the head-level abstraction is not creating the shallow \choice{} boundary; rather, it removes content-level topical signal that full-source text can still exploit. Full details are in \Cref{app:full_source}.

\paragraph{The gradient generalises across three axioms, but is steepest for the axiom of choice.}
Lean~4 has three kernel-tracked axioms: \choice{}, used by classical reasoning; \texttt{propext} (propositional extensionality), used by elementary equality; and \texttt{Quot.sound}, used by quotient types.
If the depth law were a generic axiom-dependence phenomenon, all three should produce comparable gradients.
The unconditioned comparison is uninformative because the populations overlap heavily ($78\%$ of \texttt{propext}-shallow proofs are also \choice{}-shallow).
We therefore condition on \choice{}-unreached proofs and ask whether \texttt{propext} or \texttt{Quot.sound} depth still predicts anomaly within that constructive class.
\Cref{tab:axis_comparison} reports the result.
All three axioms produce depth-dependent gradients --- a proof that uses elementary equality directly is geometrically distinct from one that uses neither equality nor classical reasoning --- but \choice{} produces the steepest by a substantial margin: depth-$2$ AUC of $0.847$, versus $0.724$ for conditional \texttt{propext} and $0.665$ for conditional \texttt{Quot.sound}.
Among Lean~4's three kernel-tracked axioms, choice has the strongest geometric signature; the operational coupling with classical reasoning makes it the natural axis for the present work.

\begin{table}[ht]
\centering
\caption{Depth-stratified $k$-NN AUC under each of Lean~4's three kernel-tracked axioms.
The \texttt{propext} and \texttt{Quot.sound} columns condition on constructive: each detector trains $k$-NN on proofs unreached by both \choice{} and the listed axis, and scores buckets of constructive proofs by their depth from the listed axis.}
\label{tab:axis_comparison}
\small
\begin{tabular}{lccc}
\toprule
    \textbf{Depth} & {\small \choice{}} & {\small \texttt{propext}} & {\small \texttt{Quot.sound}} \\
\midrule
$2$       & $\mathbf{0.847}$ & $0.724$ & $0.665$ \\
$3$       & $0.736$          & $0.700$ & $0.595$ \\
$4{-}6$   & $0.625$          & $0.591$ & $0.576$ \\
$7{-}8$   & $0.576$          & $0.527$ & $0.552$ \\
$9{+}$    & $0.507$          & $0.531$ & $0.530$ \\
\bottomrule
\end{tabular}
\end{table}

The depth law is therefore not explained by theorem content alone. Its large depth-dependent component lives in proof composition.

\section{Operational Consequences for Neural Theorem Provers}
\label{sec:operational}

The depth law is a statement about proof geometry. We now ask whether the same axis is visible operationally in theorem-prover stacks, first under \texttt{aesop} alone and then under a neural-guided ReProver--\texttt{aesop} hybrid.
We sample $251$ held-out theorems across five buckets: constructive theorems and four \choice{}-reached depth ranges.
For each theorem we replace the original proof body with \texttt{by aesop} and compile with a $60$-second timeout.
A run counts as successful only if Lean exits with code $0$ and the output contains no \texttt{sorry} or \texttt{error:}.

\paragraph{Aesop Fails on Classical Theorems.} Lean's \texttt{aesop} tactic \citep{limperg2023aesop} is a best-first search over a curated rule set of lemmas tagged with the \texttt{@[aesop]} attribute.
Under this default symbolic prover, constructive and classical theorems behave very differently: \texttt{aesop} solves $20.0\%$ of constructive theorems but only $1.5\%$ of classical theorems, a Fisher odds ratio of $16.5$ ($p = 7.9 \times 10^{-6}$).
Unlike the geometric signal, this operational gap does not attenuate smoothly with dependency distance: the four \choice{}-reached depth buckets all remain near zero.
Thus proof style becomes increasingly constructive-looking with depth, while success under this prover stack remains strongly associated with whether the theorem's Mathlib proof is classical.

\paragraph{The operational gap is specific to choice.}
This effect is not a generic consequence of being close to a kernel-tracked axiom.
Lean's kernel also tracks dependence on \texttt{propext} and \texttt{Quot.sound}.
To isolate these axes from choice, we sample $60$ theorems at \texttt{propext}-depth $2$ and $60$ at \texttt{Quot.sound}-depth $2$, restricting both samples to \choice{}-unreached proofs, and run the same \texttt{aesop} pipeline.
\Cref{tab:axis_prover} reports that \texttt{aesop} solves $11.7\%$ of \texttt{propext}-shallow proofs and $13.3\%$ of \texttt{Quot.sound}-shallow proofs.
Both rates sit far above the classical $1.5\%$ (Fisher's exact $p = 1.7 \times 10^{-3}$ and $p = 4.8 \times 10^{-4}$), while neither differs significantly from the constructive $20.0\%$ in this sample ($p = 0.29$ and $p = 0.44$).
The operational gap is therefore choice-specific among Lean's kernel-tracked axioms.
\begin{table}[ht]
\centering
\caption{Operational \texttt{aesop} success rate at distance $2$ from
each of Lean~4's three kernel-tracked axioms.  \texttt{propext} and
\texttt{Quot.sound} rows are restricted to \choice{}-unreached theorems
to isolate each axis.  Wilson $95\%$ intervals.}
\label{tab:axis_prover}
\small
\begin{tabular}{lrcc}
\toprule
\textbf{Axis} & \textbf{n} & \textbf{rate} & \textbf{$95\%$ CI} \\
\midrule
\choice{}            & $47$ & $0.0\%$  & $[0.0, 7.6]$ \\
\texttt{propext}     & $60$ & $11.7\%$ & $[5.8, 22.2]$ \\
\texttt{Quot.sound}  & $60$ & $13.3\%$ & $[6.9, 24.2]$ \\
\bottomrule
\end{tabular}
\end{table}

\paragraph{Neural guidance doesn't close the gap.} \texttt{aesop}'s rule set is fixed at compile time, and its failure on classical theorems might reflect a configuration choice rather than a deeper property of the proof distribution.
We test whether neural guidance closes the gap. Results are in \Cref{tab:prover}.
Using the off-the-shelf ReProver byT5-small tactic generator of \citet{yang2023leandojo}, we read the initial proof state, beam-search the top-$8$ candidate tactics, splice each candidate followed by \texttt{all\_goals aesop} into the source, and early-exit on the first candidate that closes the proof under the same $60$-second timeout.
As a control, we also run the same hybrid pipeline using top-$8$ tactics generated for a different theorem in the same bucket.

\begin{table}[ht]
\centering
\caption{Operational success on $251$ theorems under a $60$-second
timeout. Hybrid tries ReProver's top-$8$ initial tactics followed by
\texttt{all\_goals aesop}; Shuffled uses another theorem's candidates.}
\label{tab:prover}
\small
\begin{tabular}{lrrrr}
\toprule
\textbf{Bucket} & \textbf{n} & \textbf{aesop} & \textbf{Hybrid} & \textbf{Shuffled} \\
\midrule
Constructive    & $50$  & $20.0\%$ & $\mathbf{22.0\%}$ & $10.0\%$ \\
Depth $2$       & $47$  & $0.0\%$  & $\mathbf{4.3\%}$  & $0.0\%$ \\
Depth $3{-}4$   & $55$  & $3.6\%$  & $\mathbf{5.5\%}$  & $0.0\%$ \\
Depth $5{-}6$   & $51$  & $0.0\%$  & $\mathbf{3.9\%}$  & $0.0\%$ \\
Depth $7{+}$    & $48$  & $2.1\%$  & $\mathbf{4.2\%}$  & $0.0\%$ \\
\midrule
\emph{classical combined} & $201$ & $1.5\%$ & $\mathbf{4.5\%}$ & $0.0\%$ \\
\bottomrule
\end{tabular}
\end{table}

The hybrid solves $22.0\%$ of constructive theorems and $4.5\%$ of \choice{}-reached theorems, reducing the Fisher odds ratio from $16.5$ for \texttt{aesop} alone to $6.02$ ($p = 2.9 \times 10^{-4}$).
Neural guidance therefore compresses the operational gap but does not close it.
The shuffled control shows that the improvement is theorem-specific: using tactics generated for a different theorem solves zero \choice{}-reached theorems, whereas the real hybrid solves nine.
As a pipeline diagnostic, at least one ReProver top-$8$ candidate type-checks at the initial state for $78.0\%$ of constructive theorems and $73.6\%$ of \choice{}-reached theorems, so the remaining gap is not simply a failure to emit locally valid tactics.

Finally, for \texttt{aesop}, the geometric anomaly score predicts failure beyond proof length.
A $5$-fold cross-validated logistic regression using only log proof length reaches median bootstrap AUC $0.766$ over $1000$ resamples.
Adding the $k$-NN anomaly score raises the median to $0.841$, a paired improvement of $+0.071$ AUC with bootstrap interval $[+0.003,+0.179]$ and $\mathrm{P}(\Delta > 0)=0.978$.
This effect is best interpreted at the full-sample level: within-class intervals cover zero at this sample size.

\paragraph{Scope of the operational claim.}
The gap is significant under both tested stacks.
Prefixing \texttt{aesop} with \texttt{classical;} changes no outcome across the same $251$ theorems (all McNemar $p=1.0$).
The same null holds on eight hand-written positive controls, including excluded middle and Peirce's law, even though these goals are solvable with \texttt{tauto}, and excluded middle becomes solvable by \texttt{aesop} when \texttt{Classical.em} is supplied explicitly.
Thus, in our configuration, \texttt{classical} does not expose classical primitives to \texttt{aesop}'s search rule set (\Cref{tab:classical_ablation}), so we cannot distinguish a structural barrier from a rule-set interaction.
The geometric result is independent of this interpretation, and under both tested stacks success remains strongly associated with \choice{} dependence.

\section{Discussion}
\label{sec:discussion}

In Mathlib, classical tactic traces near \choice{} occupy a distinct
region of proof space learned from constructive proofs.
This turns the classical/constructive distinction into a measured
geometry.
As dependence on \choice{} becomes more remote, that region folds back
toward the constructive baseline, and anomaly score, reconstruction loss,
and density containment recover the same descending coordinate.
This is a statement about observed proofs, the level at which theorem
provers train, retrieve, and search.

The operational experiments show that this trace has force.
Under the tested \texttt{aesop} and ReProver--\texttt{aesop} stacks,
constructive theorems close far more often than classical ones, neural
guidance narrows but does not erase the gap, and the effect is specific
to \choice{} among Lean's kernel-tracked axioms.
The depth law exposes an unexploited structural axis in learned proof systems: axiom-dependence depth predicts proof geometry and prover behaviour, yet is absent from current designs, benchmarks, and training pipelines.
How such signatures are best exploited is itself an empirical question, and we hope this work motivates the community to identify analogous structure across formal libraries.
\choice{} is not just metadata in Lean's dependency graph: it has shape, distance, and operational force.


\bibliographystyle{icml2026}
\bibliography{refs}

\newpage
\appendix
\onecolumn

\section{Reproducibility}

Code, theorem identifiers, dependency-depth labels, data splits, the
$251$-theorem operational sample, and commands to reproduce every table
and figure are available at \url{https://github.com/rodrgo/geometric-axiom-of-choice}.
All randomness is seeded, and the repository pins the software environment and commands used to regenerate the encoder checkpoints and embeddings.
The repository records the exact Lean, Mathlib, and LeanDojo revisions
used in all experiments.

\section{Limitations}
The main limitations of the study are scope limitations rather than threats to the central measurement. First, our conclusions are about observed Mathlib proofs and their tactic traces, not about an intrinsic geometry of all possible mathematical proofs. Second, our learned-representation experiments intentionally focus on tactic-mode proofs with usable traces and exclude term-mode proofs and one-step scripts such as \texttt{by rfl} or \texttt{by simp}; this yields a cleaner sequence-learning population but not a census of all declarations. Third, our main encoder uses tactic heads, which abstracts away arguments, lemma names, and local hypotheses; the full-source encoder experiments show that the shallow \texttt{Classical.choice} boundary is not created by this abstraction, but the head-level representation should still be read as a measurement of proof-compositional style rather than full semantic content. Fourth, the operational experiments measure a specific Lean 4/Mathlib prover stack, namely \texttt{aesop} and a ReProver--\texttt{aesop} hybrid, and therefore do not imply that every prover must exhibit the same success gap. Finally, our experiments identify a robust association between \texttt{Classical.choice} dependence, embedding geometry, and prover behaviour; they do not by themselves prove a unique causal mechanism for the operational gap. These qualifications are important, but they leave the main empirical conclusion intact: across label-free encoders, detector families, reconstruction loss, density-superlevel containment, length, file, author, and topic controls, full-source checks, and prover evaluations, \texttt{Classical.choice} dependence leaves a stable and measurable signature in the proof distributions used by current theorem-proving systems.

\section{Robustness}
\label{app:robustness}

\paragraph{The signature is not keywords.}

The most worrying potential confound is that the encoder simply detects explicit classical tactic tokens.
Markers such as \texttt{by\_contra}, \texttt{by\_cases}, \texttt{classical}, \texttt{exfalso}, \texttt{push\_neg}, and a small family of related keywords appear almost exclusively in classical proofs, and an encoder that learned to flag them would produce a depth-stratified AUC much like the one we report.
We rule this out by stripping every occurrence of these markers from the tactic sequences and re-embedding with the \emph{frozen} encoder.
The modification preserves $89.7\%$ of proofs unchanged and removes an average of $1.2$ tokens from each affected proof.
\Cref{tab:depth_ablation} and \Cref{fig:depth_ablation} report the result.

\begin{table}[ht]
\centering
\caption{Depth-stratified ablation of classical tactic markers. The encoder is frozen and only the input token sequences change. ``Frac.\ with marker'' is the fraction of proofs in the bucket containing at least one classical marker in the original sequence.}
\label{tab:depth_ablation}
\small
\begin{tabular}{lrccr}
\toprule
\textbf{Bucket} & \textbf{n} & \textbf{Orig AUC} & \textbf{Ablated AUC} & \textbf{Frac.\ marker} \\
\midrule
depth $2$     & $3{,}680$ & $0.847$ & $\mathbf{0.754}$ & $67.0\%$ \\
depth $3$     & $8{,}686$ & $0.736$ & $0.728$ & $10.5\%$ \\
depth $4$     & $8{,}299$ & $0.648$ & $0.646$ & $4.7\%$ \\
depth $5$     & $4{,}393$ & $0.598$ & $0.594$ & $2.7\%$ \\
depth $6$     & $2{,}580$ & $0.600$ & $0.596$ & $2.7\%$ \\
depth $7{-}8$ & $2{,}604$ & $0.576$ & $0.572$ & $2.7\%$ \\
depth $9{+}$  & $872$     & $0.507$ & $0.504$ & $2.5\%$ \\
\bottomrule
\end{tabular}
\end{table}

\begin{figure}[ht]
  \centering
  \includegraphics[width=0.98\columnwidth]{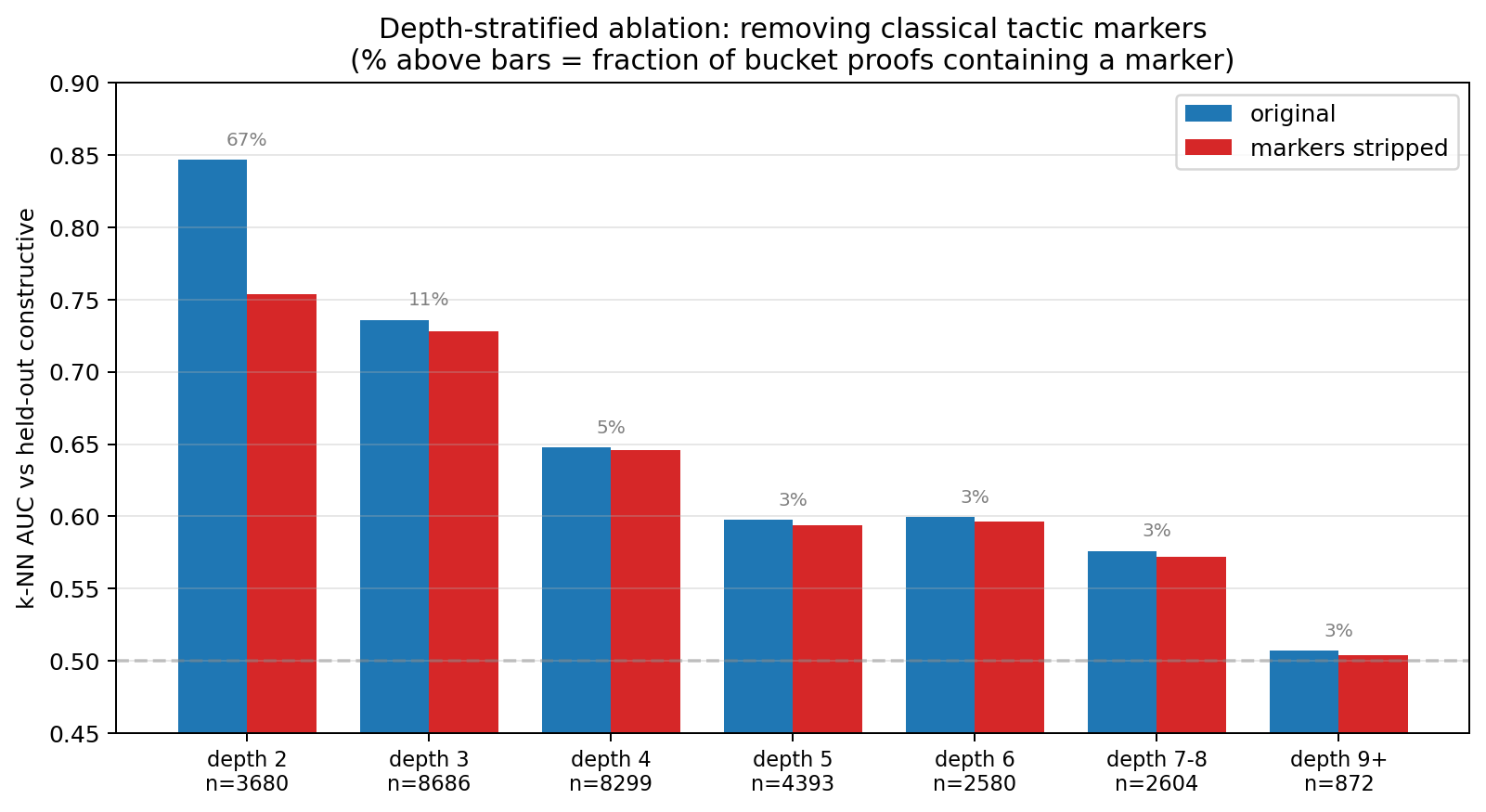}
  \caption{Depth-stratified ablation. Grey percentages mark the fraction of proofs in each bucket containing a classical tactic marker. Depth $2$ loses $0.09$ AUC (from $0.847$ to $0.754$), confirming some marker-keyword signal there. All deeper buckets lose at most $0.008$.}
  \label{fig:depth_ablation}
\end{figure}

Markers are concentrated at depth $2$, where $67\%$ of proofs contain at least one, falling below $11\%$ at every deeper bucket.
This is consistent with the depth interpretation, since proofs within one or two hops of \choice{} are precisely those that invoke classical tactics directly.
After stripping every such marker, depth-$2$ AUC is $0.754$, which is well above the $0.67$ aggregate and well separated from chance, while every deeper bucket moves by less than $0.008$.
The depth law is not a keyword artefact.
It is carried by proof structure that the encoder learned without any supervision about classical dependence.

\paragraph{The signature is not length.}

A second confound is that classical proofs may simply be longer or shorter than constructive ones, in which case any anomaly score sensitive to length would produce a similar pattern.
We control for this in two ways.
The length-residualised column of \Cref{tab:lean_support_full} fits a linear regression of each embedding dimension on proof length, computed on the constructive training set, and subtracts the prediction from every proof's embedding before scoring.
The headline $k$-NN AUC changes by $+0.003$ under this residualisation, from $0.672$ to $0.675$; the headline KDE and Isolation Forest variants change by $+0.008$ and $-0.008$, respectively, while the auxiliary one-class SVM sweep is more sensitive and is reported separately in \Cref{tab:lean_support_full}.
We apply the same residualisation to the reconstruction-loss measurement and observe the same outcome (\Cref{tab:reconstruction_loss}): the $d \leq 2$ excess of $+0.65$ nats per masked token over the residualised constructive baseline is essentially unchanged from the raw excess, and the depth-wise attenuation to depth $9{+}$ is preserved.
The depth law does not reduce to a length effect.

\begin{table}[ht]
\centering
\caption{Mean reconstruction loss (cross-entropy on
$20\%$-masked tactic tokens, averaged over $10$ maskings) by depth
bucket. ``Raw'' is the mean loss directly. ``Residualised'' is the
loss after fitting a log-length baseline on constructive training
proofs and subtracting the prediction, an excess loss over what length
alone would predict. The first classical row is the original
reconstruction bucket $d \leq 2$, containing $27$ maskable depth-$1$
proofs and all $3{,}680$ depth-$2$ proofs; all subsequent rows use the
displayed strict ranges. Significance is a Mann-Whitney $U$ test
(alternative: greater) against the held-out constructive test bucket
for the raw column, and against the residualised test bucket for the
residualised column.}
\label{tab:reconstruction_loss}
\small
\begin{tabular}{lrcc}
\toprule
\textbf{Bucket} & \textbf{n} & \textbf{Raw} & \textbf{Residualised} \\
\midrule
constructive (test) & $1{,}115$ & $2.47$ & $+0.07$ \\
depth $\leq 2$      & $3{,}707$ & $\mathbf{3.32}^{***}$ & $\mathbf{+0.65}^{***}$ \\
depth $3$           & $8{,}657$ & $2.83^{***}$          & $+0.24^{***}$         \\
depth $4$           & $8{,}265$ & $2.71^{***}$          & $+0.22^{***}$         \\
depth $5{-}6$       & $6{,}936$ & $2.59^{***}$          & $+0.14^{*}$           \\
depth $7{-}8$       & $2{,}586$ & $2.53$                & $+0.10$               \\
depth $9{+}$        & $864$     & $2.36$                & $-0.03$               \\
\bottomrule
\end{tabular}
\\[2pt]
\footnotesize{$^{*}\,p<0.05$, $^{***}\,p<10^{-8}$ (Mann-Whitney, one-sided).}
\end{table}

\paragraph{The signature is not file-level style.}

A third confound is more subtle.
Mathlib is organised into files, and different files have different authors, conventions, and tactical styles, so the geometric separation we measure might reflect file identity rather than axiom dependence.
We rule this out with a mixed-effects regression that absorbs file identity as a random intercept, with fixed-effect controls for proof length, tactic diversity, fraction of structural tactics, and fraction of automation tactics.
Across $5{,}386$ files we estimate $\widehat{\beta}_{\rm cls} = +0.402$ ($\mathrm{SE} = 0.009$, $p < 10^{-16}$): within the same Mathlib file at matched length and tactic composition, a classical proof sits $0.40$ standard deviations farther from the constructive cluster than a constructive proof.
A non-parametric companion test on $2{,}740$ length-matched within-file pairs returns a one-sided Wilcoxon signed-rank $p = 1.3 \times 10^{-117}$.
Full regression coefficients and per-bucket matched-pairs results are in \Cref{app:robustness}.

\paragraph{The signature is not the author.}
A closely related concern is that the separation reflects individual author style rather than dependence on \choice{}.
The file random intercept above already absorbs much of any author effect, because Mathlib files skew towards a single author, but a file-level author label would be a strict coarsening of that control.
We therefore attach a \emph{per-declaration} author by crediting each theorem's source span to its dominant \texttt{git blame} author in the Mathlib history at the LeanDojo trace commit, resolving authors for $99.0\%$ of the population across $446$ authors.
This label crosscuts files --- $72\%$ of authors write in more than one file and $41\%$ of files contain proofs by more than one author --- so it is not redundant with the file intercept.
Refitting the mixed-effects model of \Cref{app:robustness:file} with an author random intercept in place of the file intercept leaves the classical coefficient essentially unchanged, $\widehat{\beta}_{\rm cls} = +0.416$ ($\mathrm{SE} = 0.008$, $p < 10^{-16}$; $n = 41{,}928$), and the depth-dummy variant recovers the same gradient ($+0.64$ at depth $2$ falling to $+0.23$ at depth $9{+}$, all $p < 10^{-20}$).
A non-parametric companion that matches each classical proof to a length-matched constructive proof by the \emph{same author} ($9{,}466$ pairs, $98\%$ of them in a different file) returns a one-sided Wilcoxon gradient of $+0.86$ standard deviations at depth $2$ down to $+0.27$ at depth $7{+}$, each $p < 10^{-30}$ (\Cref{fig:confound_controls}a).
Within an author's own body of work, classical proofs remain farther from the constructive cluster; the signature is not an author-style artefact.

\paragraph{Statements are quiet, proofs are not.}

As a sanity check on \Cref{sec:law:asymmetry}, the statement encoder is not degenerate.
The frozen embeddings reach $39.7\%$ five-fold cross-validated accuracy on $24$-way Mathlib domain classification (uniform chance $4.2\%$, macro-$F_1 = 0.305$), and a ridge probe predicts $\log$ proof length with $R^2 = 0.06$.
The statement encoder captures meaningful topic and size information, but not the depth-dependent proof-composition signal measured in the main text.

\paragraph{The signature is not mathematical area.}
The within-domain analysis of \Cref{app:robustness:domain} controls for area at the level of top-level Mathlib directories, but topic could in principle be carried by proof composition at a finer granularity than the statement encoder detects.
To control for area directly, we use the theorem statement as a \emph{proof-independent} topic proxy: for each classical proof we find the constructive proof with the most similar statement embedding (greedy one-to-one matching at statement cosine $\geq 0.8$, mean $0.89$) and compare the two proofs' $k$-NN anomaly scores in the proof-embedding space.
Holding statement content approximately fixed, classical proofs remain more anomalous in proof space, and the depth law is preserved: the matched gap is $+1.38$ standard deviations at depth $2$, decaying to $+0.36$ at depth $7{+}$ and to a non-significant $+0.12$ at depth $9{+}$ ($p = 0.07$), mirroring the collapse seen elsewhere (\Cref{fig:confound_controls}a).
The result is stable across matching thresholds from cosine $0.0$ to $0.9$.
Re-running the within-area $k$-NN evaluation at Mathlib directory level~$2$ (e.g.\ \texttt{Analysis/Convex}; $52$ subdomains with at least $30$ proofs per class) preserves the boundary, with median AUC $0.713$ and $90\%$ of subdomains above $0.6$ (\Cref{fig:confound_controls}b); the weakest are classical-heavy areas such as \texttt{Geometry/Manifold}, consistent with \Cref{app:full_source}.
The depth law is therefore proof-compositional and not a reflection of mathematical area.

\begin{figure}[ht]
  \centering
  \includegraphics[width=0.98\columnwidth]{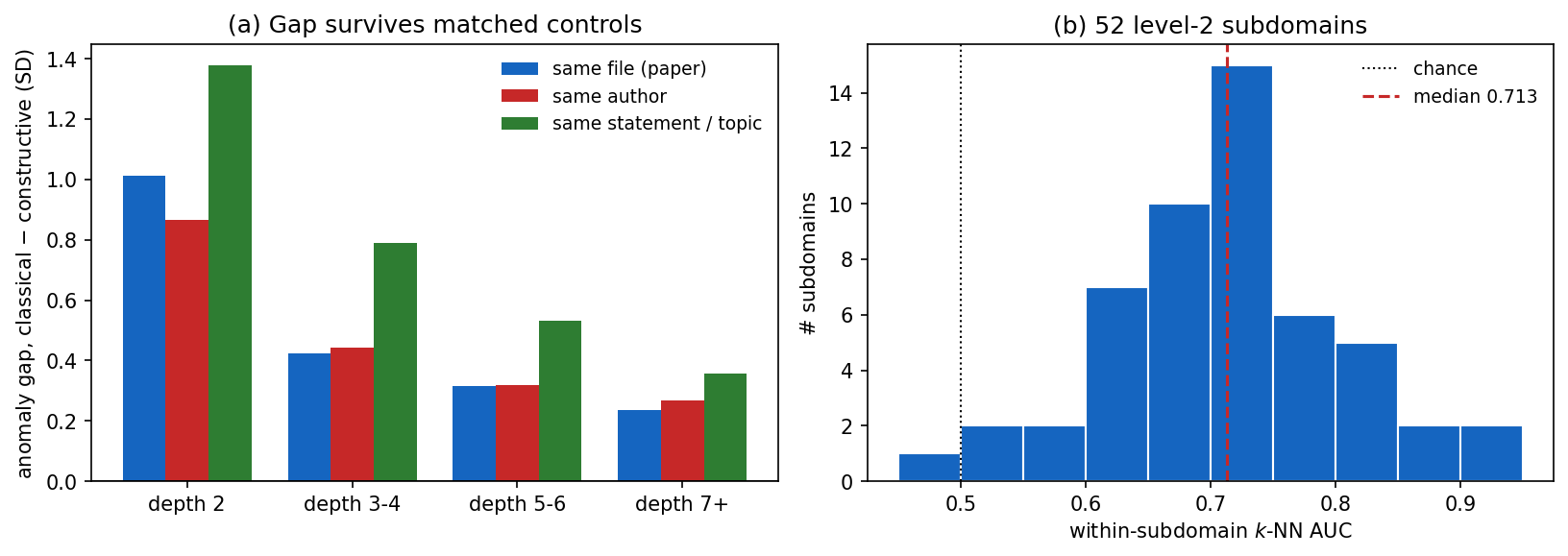}
  \caption{\textbf{The classical anomaly gap survives author and topic controls.}
  (a) Depth-stratified anomaly gap (classical $-$ constructive, in standard
  deviations) under three length-matched pairings: same file (the file-matched
  control of \Cref{app:robustness:file}), same author, and same statement
  (topic).  All three remain positive and reproduce the depth gradient.
  (b) Within-area $k$-NN AUC across the $52$ level-$2$ Mathlib subdomains with at
  least $30$ proofs per class; the boundary persists at fine topical granularity,
  with median AUC $0.713$.}
  \label{fig:confound_controls}
\end{figure}

\paragraph{Distributional confirmation.}

We close with a distributional view that does not depend on a particular detector.
The sliced Wasserstein distance \citep{rabin2011wasserstein} between two finite samples in $\reals^d$ averages one-dimensional Wasserstein distances over random projections, giving a metric sensitive to differences in any direction.
\Cref{tab:depth_ot} reports the sliced Wasserstein distance from each depth bucket to a fixed $800$-theorem constructive sample with a $1{,}000$-permutation null.

\begin{table}[ht]
\centering
\caption{Depth-stratified sliced Wasserstein distance against a fixed $800$-theorem constructive sample, $1{,}000$ permutations, $500$ projections.}
\label{tab:depth_ot}
\small
\begin{tabular}{lcccr}
\toprule
\textbf{Bucket} & \textbf{Sliced $W$} & \textbf{Null mean} & $z$ & $p$ \\
\midrule
depth $2$     & $0.0490$ & $0.0067$ & $65.2$ & $10^{-3}$ \\
depth $3$     & $0.0355$ & $0.0067$ & $44.6$ & $10^{-3}$ \\
depth $4$     & $0.0235$ & $0.0067$ & $26.3$ & $10^{-3}$ \\
depth $5{-}6$ & $0.0187$ & $0.0067$ & $18.1$ & $10^{-3}$ \\
depth $7{-}8$ & $0.0144$ & $0.0067$ & $12.2$ & $10^{-3}$ \\
depth $9{+}$  & $0.0162$ & $0.0067$ & $14.5$ & $10^{-3}$ \\
\bottomrule
\end{tabular}
\end{table}

The distributional gradient mirrors the detection-based gradient.
$z$-scores decline from $65.2$ at depth $2$ to roughly $14$ at depth $9$ and beyond, and every bucket reaches the permutation-floor $p$-value.
The depth law is visible without invoking a one-class detector at all.

\subsection{Full-Source Encoder Robustness}
\label{app:full_source}

The main proof encoder represents each proof as a sequence of tactic heads, discarding arguments, hypothesis names, and lemma citations.
This abstraction focuses representation on proof composition but raises a natural concern: perhaps the depth law is an artefact of throwing away the symbolic content of proofs.
We test this by training a second family of encoders on normalised full proof source and repeating the depth-stratified one-class tests.

\paragraph{Setup.}
For each theorem we concatenate the LeanDojo tactic strings (including tactic heads and arguments) and tokenise with a BPE vocabulary learned on constructive proofs only.
The encoder is a denoising Transformer trained only on constructive proof source, with a $512$-token context window and whole-word masked-token reconstruction at $15\%$ corruption.
We train three independent seeds and use the resulting pooled proof embeddings under the same one-class protocol as the head-level experiments.
Classical proofs are never used for encoder training, vocabulary construction, or hyperparameter selection.

We evaluate three variants of the input.
The \emph{raw} variant uses the proof source unchanged.
The \emph{stripped} variant zeroes token embeddings for atomic classical markers like \texttt{classical}, \texttt{by\_contra}, \texttt{choose}, \texttt{exfalso}, and \texttt{push\_neg}.
The \emph{combined} variant additionally removes subword pieces from multi-token classical names such as \texttt{Classical.choice}, \texttt{Classical.em}, \texttt{Classical.decEq}, and \texttt{Classical.indefiniteDescription}.
These ablations test whether the full-source signal is driven by explicit identifier leakage.

The full-source encoder learns meaningful structure but is a weaker language model than the head-level encoder.
Held-out masked-token loss is $4.25$--$4.58$ nats across seeds, above the originally chosen $3.0$-nat gate but far below the $10.31$-nat uniform-vocabulary baseline.
A frozen-domain probe reaches about $38.5\%$ accuracy on $11$ Mathlib top-level domains, and a length probe reaches $R^2 \approx 0.95$.
We therefore treat this experiment as an auxiliary robustness check, not as a replacement for the head-level encoder.

\paragraph{Depth-stratified boundary detection.}
\Cref{tab:full_source_depth} compares the full-source encoder with the head-level encoder.
The depth-$2$ boundary is essentially unchanged across encoder and stripping variants: $0.847$ head-only, $0.845$ full-source raw, $0.843$ stripped, $0.837$ combined.
The shallow boundary is not created by the tactic-head abstraction and is not removed by stripping explicit classical identifiers.

\begin{table}[ht]
\centering
\caption{Depth-stratified $k$-NN AUC for the head-level encoder and the full-source encoder.  Full-source values are medians over three encoder seeds, with the seed range in brackets.  The \emph{stripped} variant removes atomic classical-machinery tokens and the \emph{combined} variant also removes subword pieces from multi-token classical names.  The depth-$2$ boundary is stable across representations and stripping variants.}
\label{tab:full_source_depth}
\small
\begin{tabular}{lcccc}
\toprule
\textbf{Depth} & \textbf{head-only} & \textbf{full raw}
& \textbf{stripped} & \textbf{combined} \\
\midrule
$2$        & $0.847$ & $0.845$ {\scriptsize $[0.832,0.851]$}
                   & $0.843$ {\scriptsize $[0.835,0.858]$}
                   & $0.837$ {\scriptsize $[0.831,0.856]$} \\
$3$        & $0.736$ & $0.843$ {\scriptsize $[0.841,0.850]$}
                   & $0.840$ {\scriptsize $[0.840,0.850]$}
                   & $0.838$ {\scriptsize $[0.836,0.848]$} \\
$4$--$6$   & $0.625$ & $0.780$ {\scriptsize $[0.776,0.781]$}
                   & $0.779$ {\scriptsize $[0.776,0.780]$}
                   & $0.774$ {\scriptsize $[0.774,0.777]$} \\
$7$--$8$   & $0.576$ & $0.701$ {\scriptsize $[0.699,0.702]$}
                   & $0.701$ {\scriptsize $[0.698,0.701]$}
                   & $0.697$ {\scriptsize $[0.697,0.698]$} \\
$9{+}$     & $0.507$ & $0.663$ {\scriptsize $[0.663,0.665]$}
                   & $0.662$ {\scriptsize $[0.662,0.664]$}
                   & $0.661$ {\scriptsize $[0.660,0.664]$} \\
\bottomrule
\end{tabular}
\end{table}

The full-source encoder also preserves the depth trend under alternative detectors and diagnostics.
On raw full-source embeddings, depth-$2$ AUC is $0.845$ for $k$-NN, $0.710$ for Isolation Forest, and $0.689$ for KDE, with corresponding depth-$9{+}$ values $0.663$, $0.545$, and $0.619$.
Length residualisation changes every full-source $k$-NN cell by less than $0.01$.
The density-superlevel diagnostic transfers: at the constructive $S_{90}$ threshold, the full-source outside rate is about $0.39$--$0.42$ for depths $2$--$3$ and declines to $0.19$ by depth $9{+}$.
Full-source reconstruction loss moves in the same direction, with a $55\%$ excess at depth $2$ relative to held-out constructive proofs.
The shallow \choice{}-dependence is visible even when the encoder reads full proof source.

\paragraph{Why the full-source gradient is flatter.}
Unlike the head-level encoder, the full-source encoder retains a substantial signal at depth $9{+}$.
Full proof source contains lemma names, identifiers, notation, and domain-specific vocabulary, so a full-source representation can separate proofs that discuss different mathematical areas, not only proofs that use different proof-compositional moves.
We test this by repeating the $k$-NN evaluation within individual Mathlib domains, fitting and scoring the detector within each domain separately (\Cref{tab:full_source_within_domain}).

\begin{table}[ht]
\centering
\caption{Within-domain depth-stratified $k$-NN AUC for the stripped full-source encoder, reported as the median over three seeds.  Detectors are fit and evaluated within each domain.  The depth-$2$ boundary remains strong in every domain.  The deep-bucket signal is smaller in classical-heavy areas such as Analysis, suggesting that part of the all-domain full-source signal is topical or identifier-level rather than proof-compositional.  The CategoryTheory $d\geq9$ cell is omitted because the bucket has fewer than $30$ classical examples.}
\label{tab:full_source_within_domain}
\small
\begin{tabular}{lccccc}
\toprule
\textbf{Domain} & \textbf{$d=2$} & \textbf{$d=3$}
& \textbf{$d=4$--$6$} & \textbf{$d=7$--$8$} & \textbf{$d\geq9$} \\
\midrule
Algebra        & $0.833$ & $0.807$ & $0.777$ & $0.660$ & $0.690$ \\
Topology       & $0.780$ & $0.753$ & $0.721$ & $0.666$ & $0.619$ \\
RingTheory     & $0.788$ & $0.787$ & $0.745$ & $0.698$ & $0.588$ \\
CategoryTheory & $0.858$ & $0.766$ & $0.737$ & $0.722$ & --- \\
LinearAlgebra  & $0.894$ & $0.860$ & $0.793$ & $0.779$ & $0.772$ \\
Analysis       & $0.849$ & $0.823$ & $0.740$ & $0.620$ & $0.503$ \\
\midrule
all domains    & $0.843$ & $0.840$ & $0.779$ & $0.701$ & $0.662$ \\
head-only      & $0.847$ & $0.736$ & $0.625$ & $0.576$ & $0.507$ \\
\bottomrule
\end{tabular}
\end{table}

At depth $2$, the full-source boundary is robust within every domain, all above $0.78$, and stripping explicit classical names barely changes the result.
At deeper depths the two encoders answer slightly different questions.
The head-level encoder measures proof-composition style and collapses to chance by depth $9{+}$.
The full-source encoder measures both proof style and mathematical content, and can therefore keep separating deep classical proofs when their identifiers or lemma references remain unusual within the constructive training distribution.
In Analysis, where classical methods are locally common, the depth-$9{+}$ full-source AUC drops to $0.503$, essentially matching the head-level collapse.
In areas like LinearAlgebra some deep signal remains, consistent with locally unusual content rather than an additional compositional boundary.

The sharp shallow boundary near \choice{} is not an artefact of discarding tactic arguments.
It appears with almost identical AUC under a richer proof-source encoder, survives explicit stripping of classical identifiers and length residualisation, and appears across alternative one-class detectors and within-domain evaluations.
The head-level encoder is therefore not a lossy shortcut that manufactures the effect, but a controlled abstraction that removes much of the topical signal while retaining the proof-method boundary.

\subsection{The signature is steepest for the axiom of choice.}

A fourth concern is whether the geometric signature is specific to \choice{} or whether any kernel-tracked axiom would produce a similar depth gradient.
Lean~4 has three such axioms: \choice{} (used by classical reasoning), \texttt{propext} (propositional extensionality, used implicitly by elementary equality), and \texttt{Quot.sound} (used by quotient types).
If the depth law were a generic axiom-dependence phenomenon rather than something specific to choice, all three would produce comparable gradients.

We re-run BFS from \texttt{propext} and \texttt{Quot.sound} against the same kernel graph and recompute depth-stratified $k$-NN AUC under the same frozen encoder.
At first pass all three axioms produce gradients, but the populations overlap heavily.
$78\%$ of \texttt{propext}-depth-$2$ proofs in the evaluation corpus are also \choice{}-shallow (depth $2$--$6$ from \choice{}), because a proof that uses elementary equality at distance $1$ from \texttt{propext} typically also uses classical reasoning at some shallow depth from \choice{}.
This overlap makes the unconditioned comparison uninformative.

The clean test conditions on \choice{}-unreached proofs and asks whether \texttt{propext} or \texttt{Quot.sound} depth still predicts anomaly within the constructive class.
Each conditional detector trains $k$-NN on doubly-unreached proofs (no dependence on \choice{} or the listed axis) and scores depth-stratified buckets of \choice{}-unreached proofs by their depth from the listed axis.
\Cref{tab:axis_comparison} reports the result.
The \choice{} gradient is the steepest by a substantial margin: depth-$2$ AUC of $0.847$ from \Cref{tab:depth_strat} versus $0.724$ for \texttt{propext} (conditional, $n = 1{,}628$) and $0.665$ for \texttt{Quot.sound} (conditional, $n = 1{,}673$).
All three gradients converge to near-chance ($0.52$--$0.53$) at depth $9{+}$.

The depth law is therefore not unique to choice.
A \texttt{propext}-depth-$2$ proof unreached by \choice{} still looks anomalous to the encoder against doubly-unreached controls, since proofs that use elementary equality directly are geometrically distinct from proofs that use neither equality nor classical reasoning.
But \choice{} produces the steepest gradient of the three kernel-tracked axioms, exceeding the conditional \texttt{propext} AUC by $0.12$ and the conditional \texttt{Quot.sound} AUC by $0.18$ at depth $2$.
Among Lean~4's three kernel-tracked axioms, choice has the strongest geometric signature, and its operational coupling with classical reasoning makes it the natural axis for the present work.

\subsection{Optimal Transport Tests}
\label{app:robustness:ot}

As a global distributional check, we compute the earth-mover's distance and sliced Wasserstein distance between constructive and classical proof distributions on a class-balanced subsample, with significance assessed by a $5{,}000$-permutation null.

\begin{table}[ht]
\centering
\caption{Optimal transport tests of distributional separation. $n_1 = n_2 = 1{,}500$ subsample per class, sliced Wasserstein with $500$ random projections, $5{,}000$-permutation null (floor $p \approx 2{\times}10^{-4}$). $z = (\text{obs} - \text{null mean}) / \text{null std}$.}
\label{tab:ot_results}
\small
\begin{tabular}{lcccr}
\toprule
\textbf{Comparison} & \textbf{EMD} & \textbf{Sliced $W$} & $z$ & $p$ \\
\midrule
Lean: constr.\ vs.\ classical (raw)           & $0.566$ & $0.0248$ & $40.4$ & $2{\times}10^{-4}$ \\
Lean: constr.\ vs.\ classical (length-resid.) & $0.539$ & $0.0257$ & $8.5$  & $2{\times}10^{-4}$ \\
\bottomrule
\end{tabular}
\end{table}

Both comparisons reach the permutation-floor $p$-value of $2 \times 10^{-4}$, and effect sizes remain large after length residualisation ($z = 8.5$), confirming that constructive and classical proofs occupy different regions of embedding space independently of proof length.

\subsection{Within-Domain Analysis}
\label{app:robustness:domain}

If the depth law were a reflection of topical clustering across Mathlib areas, restricting the evaluation to a single area should weaken or abolish the signal.
\Cref{tab:within_domain} restricts the encoder $k$-NN evaluation to each of the ten largest Mathlib top-level directories and reports per-domain AUC.

\begin{table}[ht]
\centering
\caption{Within-domain boundary detection (top $10$ Mathlib domains by size), scored by $k$-NN ($k{=}1$) on encoder embeddings. The signal persists in every domain, with the strongest separation in MeasureTheory and the weakest in Combinatorics.}
\label{tab:within_domain}
\small
\begin{tabular}{lrrc}
\toprule
\textbf{Domain} & $n_\mathrm{con}$ & $n_\mathrm{cls}$ & \textbf{$k$-NN AUC} \\
\midrule
Analysis          & $349$  & $5549$ & $0.738$ \\
Algebra           & $1731$ & $3328$ & $0.695$ \\
Data              & $2894$ & $1622$ & $0.638$ \\
RingTheory        & $569$  & $3433$ & $0.701$ \\
Topology          & $958$  & $2271$ & $0.692$ \\
MeasureTheory     & $89$   & $2714$ & $0.853$ \\
CategoryTheory    & $723$  & $1586$ & $0.684$ \\
LinearAlgebra     & $497$  & $1646$ & $0.776$ \\
NumberTheory      & $172$  & $1596$ & $0.761$ \\
Combinatorics     & $630$  & $1115$ & $0.651$ \\
\bottomrule
\end{tabular}
\end{table}

The signal is present in every domain, with AUC ranging from $0.638$ in Data to $0.853$ in MeasureTheory.
The strongest signals appear in MeasureTheory and LinearAlgebra, consistent with the heavy use of \choice{} in those areas.
The within-domain pattern rules out domain composition as an explanation for the depth law.

\subsection{File-Matched Analysis and Mixed-Effects Regression}
\label{app:robustness:file}

Mathlib is organised into files, with different files written by different authors and following different conventions.
We control for file identity in two complementary ways.

\paragraph{Mixed-effects regression.}
Let $a_i$ be the $k$-NN ($k = 5$) anomaly score of proof $i$ in the $128$-dimensional encoder space, $z$-scored across all $N = 42{,}355$ proofs.
We fit a linear mixed model with a file-level random intercept $u_{f(i)}$ and four fixed-effect covariates: log proof length $\log(1 + n_{\rm inv})$, tactic diversity $n_{\rm distinct}/n_{\rm inv}$, the fraction of structural tactics, and the fraction of automation tactics.
The classical/constructive indicator enters as
\[
  z(a_i)
  \;=\; \alpha
       + \beta_{\rm cls}\,\mathrm{is\_classical}_i
       + X_i \gamma
       + u_{f(i)}
       + \varepsilon_i,
\]
fit by REML using \texttt{statsmodels.MixedLM}.
Across $5{,}386$ files, $\widehat{\beta}_{\rm cls} = +0.402$ ($\mathrm{SE} = 0.009$, $p < 10^{-16}$).
Replacing the binary indicator with depth dummies recovers the depth gradient inside the regression, with coefficients $+0.635$ at depth $2$, $+0.441$ at depth $3$, $+0.414$ at depth $4$, $+0.348$ at depth $5{-}6$, $+0.309$ at depth $7{-}8$, and $+0.214$ at depth $9{+}$, all significant at $p < 10^{-19}$.
The depth-$9{+}$ coefficient remains detectable in the regression even though its raw $k$-NN AUC is near chance, because once file, length, diversity, and composition are absorbed into the random and fixed effects, the residual classical-ness contribution persists.

\paragraph{File-matched pairs.}
As a non-parametric companion to the regression, we greedily pair every classical proof with the nearest constructive proof in the same file by absolute difference in $\log(1 + n_{\rm inv})$, accepting pairs only when the log-length gap is at most $0.3$ and using each constructive proof in at most one pair.
This yields $2{,}740$ matched pairs.
A one-sided Wilcoxon signed-rank test on $a_{\rm cls} - a_{\rm con}$ returns $p = 1.3 \times 10^{-117}$ overall, and per-depth mean differences are $+1.01$ at depth $2$, $+0.42$ at depth $3{-}4$, $+0.31$ at depth $5{-}6$, and $+0.24$ at depth $7{+}$, each significant at $p < 10^{-14}$.

\subsection{OOV-Robustness of the Reconstruction-Loss Result}
\label{app:robustness:oov}

The proof encoder vocabulary contains $219$ tactic-head tokens plus five special tokens, with out-of-vocabulary heads mapped to a dedicated UNK token that is excluded from masking in both training and evaluation.
UNK targets therefore cannot directly inflate reconstruction loss, but they might indirectly bias predictions on in-vocabulary targets through the context window.
To check this, we restrict the evaluation to proofs containing no UNK tokens anywhere.
OOV rates are small across all buckets, with a maximum of $2.7\%$ at depth $3{-}4$ and only $1.6\%$ at depth $2$.
The $d \leq 2$ excess reconstruction loss is $+34.5\%$ over baseline in the standard variant and $+35.3\%$ in the clean variant.
The reconstruction gap is not an OOV artefact.

\subsection{Multi-Seed Stability}
\label{app:robustness:seeds}

On the frozen encoder, $k$-NN ($k = 5$) AUC across five random train/test splits is $0.667 \pm 0.004$.
The bag-of-words ($0.603$) and hand-crafted statistics ($0.631$) baselines are deterministic and therefore have no seed variance.


\section{Full Method Sweep for Aggregate Lean Support}
\label{app:full_lean_support}

\Cref{tab:lean_support_full} presents the full hyperparameter sweep, including auxiliary SVM and LOF rows that are not used for the headline depth-law table. No classical-label AUC was consulted when fixing the starred variants reported in the main text.

\begin{table}[ht]
\caption{Full hyperparameter sweep for aggregate Lean boundary detection, using the same aggregate-sample protocol as the aggregate rows of \Cref{tab:depth_strat}. All one-class methods fit on $8{,}968$ constructive train proofs and are scored on $1{,}122$ constructive test against a fixed $5{,}000$ classical subsample drawn from the $31{,}144$ classical theorem proof-embedding population. The row marked $\star$ in each method family is our a priori choice, using conventional scikit-learn or literature defaults fixed before looking at any classical-label AUC. KDE bandwidth is selected by $5$-fold CV log-likelihood on the constructive training split over the candidate grid $\{0.3, 1.0, 3.0\}$, which is also label-free. Length-residualised column: embeddings with a linear regression on proof length (tactic-invocation count) subtracted. The SVM and LOF rows are auxiliary sweep results; the headline depth-law measurements use $k$-NN, KDE, and Isolation Forest.}
\centering
\label{tab:lean_support_full}
\small
\begin{tabular}{lcc}
\toprule
\textbf{Method} & \textbf{Raw AUC} & \textbf{Length-resid.\ AUC} \\
\midrule
$k$-NN ($k{=}1$)          & $0.674$ & $0.671$ \\
$k$-NN ($k{=}5$)$^{\star}$& $0.672$ & $0.675$ \\
$k$-NN ($k{=}10$)         & $0.668$ & $0.674$ \\
One-class SVM ($\nu{=}0.05$)       & $0.561$ & $0.626$ \\
One-class SVM ($\nu{=}0.1$)$^{\star}$& $0.572$ & $0.627$ \\
One-class SVM ($\nu{=}0.2$)        & $0.594$ & $0.626$ \\
KDE ($\mathrm{bw}{=}0.3$, CV-selected)$^{\star}$ & $0.653$ & $0.661$ \\
KDE ($\mathrm{bw}{=}1.0$) & $0.669$ & $0.675$ \\
KDE ($\mathrm{bw}{=}3.0$) & $0.661$ & $0.668$ \\
Isolation Forest$^{\star}$& $0.636$ & $0.628$ \\
LOF ($k{=}20$)$^{\star}$  & $0.584$ & $0.604$ \\
\bottomrule
\end{tabular}
\end{table}

\section{Superlevel Containment Sweep}
\label{app:containment_sweep}

\begin{table}[H]
\centering
\caption{Fraction of classical proofs inside the constructive density's superlevel set $S_q$, by depth. The constructive test baseline is $q\%$ by construction.}
\label{tab:superlevel_containment}
\small
\begin{tabular}{lrccc}
\toprule
\textbf{Bucket} & \textbf{n} & $S_{80}$ & $S_{90}$ & $S_{95}$ \\
\midrule
Constructive (baseline) & $1{,}122$ & $80.0\%$ & $90.0\%$ & $95.0\%$ \\
\midrule
depth $2$     & $3{,}680$ & $38.9\%$ & $\mathbf{57.2\%}$ & $73.1\%$ \\
depth $3$     & $8{,}686$ & $48.9\%$ & $67.4\%$ & $81.1\%$ \\
depth $4$     & $8{,}299$ & $64.3\%$ & $79.9\%$ & $89.3\%$ \\
depth $5{-}6$ & $6{,}973$ & $70.6\%$ & $85.5\%$ & $92.5\%$ \\
depth $7{-}8$ & $2{,}604$ & $75.1\%$ & $88.3\%$ & $93.7\%$ \\
depth $9{+}$  & $872$     & $81.5\%$ & $\mathbf{91.9\%}$ & $95.0\%$ \\
\bottomrule
\end{tabular}
\end{table}

\section{Fitting the Mixture Weight
\texorpdfstring{$\lambda_d$}{lambda\_d}}
\label{sec:lambda-fit}

Under \Cref{eq:mixture,prop:mixture}, each fixed measurement is affine
in $\lambda_d$. Taking held-out constructive proofs as the
$\lambda=0$ anchor, we use the strict depth-$2$ bucket as the
$\lambda=1$ anchor for AUC and containment and the original
$d \leq 2$ reconstruction bucket as the $\lambda=1$ anchor for loss.
The three measurements imply
\[
\begin{aligned}
  \widehat{\lambda}^{\mathrm{AUC}}_d
    &= \frac{\mathrm{AUC}_d-0.5}
             {\mathrm{AUC}_2-0.5},\\
  \widehat{\lambda}^{\mathrm{loss}}_d
    &= \frac{\ell_d-\ell_{\mathrm{con}}}
             {\ell_{\leq 2}-\ell_{\mathrm{con}}},\\
  \widehat{\lambda}^{S_{90}}_d
    &= \frac{0.90-c_d}
             {0.90-c_2},
\end{aligned}
\]
where $\ell_d$ is mean masked-token reconstruction loss,
$\ell_{\leq 2}$ is the mean in the original shallow reconstruction
bucket, and $c_d$ is the fraction of depth-$d$ proofs inside the
constructive $S_{90}$ superlevel set. The final expression is
equivalently the endpoint calibration of the outside-$S_{90}$ rate
plotted in \Cref{fig:three_measurements}.

\begin{table}[ht]
\centering
\caption{\textbf{Implied mixture weights.}
Each measurement independently estimates $\lambda_d$ after anchoring
held-out constructive proofs at $0$ and its shallow endpoint at $1$.
AUC and containment use strict depth $2$; reconstruction uses
$d \leq 2$. Small negative values at depth $9{+}$ are left unclipped.}
\label{tab:lambda-implied}
\small
\setlength{\tabcolsep}{4pt}
\begin{tabular}{lrrrrrr}
\toprule
& $2$ & $3$ & $4$ & $5{-}6$ & $7{-}8$ & $9{+}$ \\
\midrule
$\widehat{\lambda}^{\mathrm{AUC}}_d$
  & $1.00$ & $0.68$ & $0.43$ & $0.28$ & $0.22$ & $0.02$ \\
$\widehat{\lambda}^{\mathrm{loss}}_d$
  & $1.00$ & $0.43$ & $0.29$ & $0.15$ & $0.07$ & $-0.13$ \\
$\widehat{\lambda}^{S_{90}}_d$
  & $1.00$ & $0.69$ & $0.31$ & $0.14$ & $0.05$ & $-0.06$ \\
\bottomrule
\end{tabular}
\end{table}

As a stronger test, we fit $\lambda_d$ from AUC alone and use it to
predict the other two measurements:
\[
  \widehat{\ell}_d
  =
  (1-\widehat{\lambda}^{\mathrm{AUC}}_d)\ell_{\mathrm{con}}
  +
  \widehat{\lambda}^{\mathrm{AUC}}_d\ell_{\leq 2},
  \qquad
  \widehat{c}_d
  =
  (1-\widehat{\lambda}^{\mathrm{AUC}}_d)0.90
  +
  \widehat{\lambda}^{\mathrm{AUC}}_d c_2.
\]
Only the constructive baseline and the corresponding shallow endpoint
of each target measurement enter these predictions; no
reconstruction-loss or containment observation from depth $3$ onward
is used.
Across the five
non-anchor buckets, the resulting RMSE is $0.15$ nats for
reconstruction loss and $3.9$ percentage points for $S_{90}$
containment. Thus one mixture weight estimated from AUC reproduces the
attenuation observed in the other two measurements, supporting the
one-parameter reading of the depth law.

All estimates and fit statistics are computed from the unrounded
measurements; displayed values are rounded. For reconstruction, the
shallow anchor pools $27$ maskable depth-$1$ proofs with the $3{,}680$
depth-$2$ proofs; AUC and containment use strict depth $2$. The fit
inherits these shallow buckets as proxies for the directly-classical
frontier $R$ and tests a consequence of the mixture model, rather than
establishing that it is the unique explanation of the observed
gradients.

\section{Data, Models and Training}
\label{app:models}

All learned models are PyTorch implementations trained with AdamW (weight decay $0.01$, gradient clipping at $1.0$).
All probes and supervised baselines use scikit-learn.

\paragraph{Data.}
The full kernel dependency graph contains $471{,}260$ Mathlib declarations, of which $171{,}522$ are \choice{}-reached and $299{,}738$ are \choice{}-unreached.
For the learned proof-representation experiments, we restrict to LeanDojo-traced theorems \citep{yang2023leandojo} with a usable tactic-level proof, defined as between $2$ and $200$ tactic invocations each with an extractable leading head.
This removes term-mode proofs and one-step scripts like \texttt{by rfl} or \texttt{by simp} that carry little sequence-level information, and leaves $42{,}355$ theorems (median tactic-trace length $4$).
Full counts are in \Cref{tab:counts}.

\begin{table}[ht]
\centering
\caption{Declaration counts used in the Lean analyses.}
\label{tab:counts}
\small
\begin{tabular}{lr}
\toprule
\textbf{Population} & \textbf{Count} \\
\midrule
Declarations in Lean \texttt{Environment}   & $471{,}260$ \\
\quad Classical / Constructive              & $171{,}522$ / $299{,}738$ \\
\midrule
Theorems matched to LeanDojo                & $120{,}715$ \\
\quad Classical / Constructive              & $67{,}514$ / $53{,}201$ \\
\midrule
With usable tactic-level proofs             & $42{,}355$ \\
\quad Classical / Constructive              & $31{,}144$ / $11{,}211$ \\
\bottomrule
\end{tabular}
\end{table}

\paragraph{Lean statement denoising encoder $\phi_{\rm stmt}$.}
A Transformer encoder over subword tokens of Lean theorem statements,
trained on constructive statements only under masked-token denoising with
$20\%$ token corruption. As with $\phi_{\rm proof}$, the training loss is
token-level cross-entropy on masked positions. The frozen encoder emits
$128$-dimensional embeddings and is used only as a feature extractor for
the statement-level probes in \Cref{sec:law:asymmetry}. Thus
\Cref{tab:lean_statements} compares input views (statement text versus
tactic traces) rather than different self-supervised objectives.

\paragraph{Lean proof denoising encoder $\phi_{\rm proof}$.}
$4$-layer Transformer encoder paired with a $2$-layer Transformer decoder ($d_{\rm model} = 128$, $4$ heads, feed-forward $512$, dropout $0.1$, maximum sequence length $64$, $1.4$M parameters total).
The vocabulary consists of $219$ tactic heads occurring at least five times in constructive proofs, plus five special tokens.
The training objective replaces $20\%$ of input tokens with a \texttt{[MASK]} symbol, and the decoder is trained to recover the originals under token-level cross-entropy on the masked positions only.
Trained for $20$ epochs at learning rate $3 \times 10^{-4}$ and batch size $128$ on the $8{,}968$ constructive proofs in the training split.
The proof embedding is the mean-pooled encoder output, ignoring padded positions, and the encoder is frozen for all downstream analyses.

\paragraph{Embedding normalisation and detector feature space.}
The pooled proof embedding is $L^2$-normalised when it is saved, so the raw saved vector lies on $\mathbb{S}^{127}$. For one-class analyses, we then fit a per-coordinate \texttt{StandardScaler} on the $8{,}968$ constructive training embeddings and apply the same transform to held-out constructive and classical proofs. Thus the reported $k$-NN score is mean Euclidean distance to the $5$ nearest constructive-training proofs in this constructive-standardised feature space. The transform is fit without classical labels and is shared with KDE, one-class SVM, Isolation Forest, and LOF, whose default scores are sensitive to coordinate scale. Consequently, the reported detector scores should be read as standardised-Euclidean scores rather than cosine scores on the raw unit-sphere embeddings. As a metric ablation, cosine $k$-NN on the unstandardised $L^2$-normalised embeddings gives the same depth curve: depth-$2$ AUC is $0.841$ rather than $0.847$, and every depth bucket differs by at most $0.006$ AUC.

\paragraph{One-class detectors.}
All detectors are scikit-learn implementations fit on $\phi_{\rm proof}$ embeddings of the feature space described above.
$k$-NN uses \texttt{NearestNeighbors} with $k = 5$ as the default, with supplementary sweeps over $k \in \{1, 5, 10\}$.
Gaussian KDE uses \texttt{KernelDensity} with bandwidth selected by $5$-fold log-likelihood cross-validation on the constructive training split over the grid $\{0.3, 1.0, 3.0\}$, which picks $0.3$.
One-class SVM uses \texttt{OneClassSVM} with the RBF kernel, $\gamma = \texttt{scale}$, $\nu = 0.1$.
Isolation Forest uses \texttt{IsolationForest} with $200$ trees and contamination set to \texttt{auto}.
Local Outlier Factor uses \texttt{LocalOutlierFactor} with $k = 20$ and \texttt{novelty=True}.

\paragraph{Length residualisation.} For each proof $i \in \{1, \dots, N\}$ we have an embedding $\mathbf{e}_i \in \mathbb{R}^{128}$ and a scalar length $\ell_i \in \mathbb{R}$. We posit that each coordinate of the embedding depends linearly on length, $e_{i,j} \approx \alpha_j + \beta_j \ell_i$, and fit the intercepts and slopes by ordinary least squares on the training set, stacking them into vectors $\hat{\boldsymbol{\alpha}}, \hat{\boldsymbol{\beta}} \in \mathbb{R}^{128}$. For every proof (train and test) we then subtract the fitted prediction $\tilde{\mathbf{e}}_i \;=\; \mathbf{e}_i - \hat{\boldsymbol{\alpha}} - \hat{\boldsymbol{\beta}}\, \ell_i$, and pass the residual $\tilde{\mathbf{e}}_i$ to downstream analyses. 

\paragraph{Linear probes.}
All probes are $\ell_2$-regularised scikit-learn models with $C = 1.0$, L-BFGS solver, $2000$ iterations, evaluated by $5$-fold cross-validation.
The domain probe is multinomial \texttt{LogisticRegression} predicting Mathlib's top-level directory across the $24$ domains with at least $200$ theorems each, from frozen $\phi_{\rm stmt}$ embeddings.
The length probe is a \texttt{Ridge} regression predicting $\log(1 + n_{\rm inv})$ from the same embeddings.
The structural-feature probe is binary \texttt{LogisticRegression} on the $19$-dimensional hand-crafted statistics, trained on a class-balanced subsample.

\paragraph{Compute.}
All experiments run on a single CPU on a MacBook (Darwin, Apple silicon).
One-class detector fits take roughly one minute per dataset.
Optimal transport tests with permutation null take roughly two minutes per comparison.
The longest single operation is the \texttt{aesop} and ReProver-hybrid evaluation, which runs $251$ $60$-second Lean compiles per probe.

\paragraph{Population sizes}
We give the full description of \Cref{tab:counts} which reports the population sizes we used in this paper.
As mentioned in the text, two  filters narrow the initial set to the population we analyse in our learned-representation experiments.

First, we restrict to LeanDojo \citep{yang2023leandojo} records whose name resolves to a node in our kernel-level dependency graph, so that each theorem inherits a precise classical/constructive label; this yields $120{,}715$ theorems.
Many of these declarations are nonetheless written in term mode directly, or are auto-generated (e.g.\ structure projections, instance fields), and have no tactic script attached.

Second, for the proof-encoder analyses we keep only theorems whose LeanDojo trace contains between $2$ and $200$ tactic invocations from which we can extract a leading head identifier (the first token of each  invocation, e.g.\ \texttt{intro}, \texttt{rw}, \texttt{simp}).
This single length-based filter drops two classes of declarations at once: term-mode proofs with no tactics at all ($53,659$ theorems), and single-tactic proofs such as \texttt{by rfl} or \texttt{by simp} that carry essentially no sequence-level information ($24,701$ theorems).
No vocabulary restriction is applied at this stage---out-of-vocabulary heads are mapped to a dedicated \texttt{[UNK]} token at encoding time.  This leaves $42{,}355$ theorems, split $31{,}144$ classical and $11{,}211$ constructive, with tactic-head sequences of median length $4$, mean $5.9$, $95$th percentile $18$; the encoder pads or truncates each sequence to length $64$.
The depth-stratified analyses that follow are computed on this final set.
Note that the constructive fraction drops from $63.6\%$ in the full library to $26.5\%$ here: short utility lemmas closable by a single \texttt{simp} or \texttt{rfl} are disproportionately constructive and are filtered out by the minimum-length requirement, while classical proofs tend to be longer and survive.

\paragraph{Label-free encoder}
A central methodological commitment is that the encoder is \emph{label-free}: no information about the classical/constructive partition influences its training data, vocabulary, or hyperparameters.
Concretely, (i) the $80/10/10$ train/val/test split is carved from constructive proofs only, and classical proofs form a disjoint held-out population the encoder never sees during training, validation, or model selection; (ii) the tactic-head vocabulary is built from heads occurring at least five times in constructive proofs, so heads appearing only in classical proofs are out-of-vocabulary and map to a dedicated \texttt{[UNK]} token at encoding time; and (iii) the encoder architecture and training hyperparameters ($d_{\mathrm{model}}{=}128$, $4$-layer encoder + $2$-layer decoder, maximum sequence length $64$, mask probability $0.20$, AdamW with lr $3\!\times\!10^{-4}$, $20$ epochs) are fixed defaults set from data diagnostics (e.g., the $64$ length cap is the $95$th percentile of constructive proof length), while downstream one-class hyperparameters are either pre-registered defaults ($k$-NN $k{=}5$, OCSVM $\nu{=}0.1$, LOF $k{=}20$, Isolation Forest at scikit-learn defaults) or selected by cross-validated log-likelihood on the constructive training set alone (KDE bandwidth, PCA dimension for the superlevel analysis).
When classical proofs enter the analysis, they enter as a held-out population to be measured, never as training signal or as a selection criterion.

\section{Operational Consequences for Neural Theorem Provers}
\label{app:operational}

\subsection{Aesop Fails on Classical Theorems}
\label{sec:operational:aesop}

We now show that the depth law has direct operational consequences for current theorem provers.
Lean's \texttt{aesop} tactic \citep{limperg2023aesop} is the standard general-purpose automation in Lean~4, a best-first search over a rule set fixed at compile time.
If the depth law identifies a frontier in proof methodology, theorems on the far side of that frontier should be harder for \texttt{aesop} to solve.

\paragraph{Sampling and pipeline.}
We draw up to $60$ theorems per bucket (constructive, depth $2$, depth $3{-}4$, depth $5{-}6$, depth $7{+}$) from the encoder's held-out splits, with constructive theorems from the test and validation splits and classical theorems from the full classical population the encoder never saw.
After filtering theorems whose Mathlib source files no longer exist in our checkout, $251$ theorems remain.
For each theorem we replace the original proof body with \texttt{by aesop} and compile the modified file under a $60$-second wall-clock timeout, using the Mathlib project's compiled artefacts as dependencies.
Success requires exit code $0$ with no \texttt{sorry} or \texttt{error:} in the output.

\paragraph{Result.}
\texttt{aesop} solves $20.0\%$ of constructive theorems and $1.5\%$ of classical ones (Fisher's exact odds ratio $16.5$, $p = 7.9 \times 10^{-6}$).
Under default \texttt{aesop}, the observed classical success rate is lower by more than an order of magnitude.

\paragraph{Depth gradient within classical theorems.}
The operational gradient does not match the geometric one.
The embedding AUC of \Cref{tab:depth_strat} declines from $0.847$ at depth $2$ to $0.51$ at depth $9{+}$, while \texttt{aesop} success rates across the four classical depth buckets are $0\%$, $3.6\%$, $0\%$, $2.1\%$, essentially flat at near-zero.
The two measurements are complementary.
The embedding AUC tracks proof style, which becomes constructive-looking once classical reasoning is mediated through long lemma chains.
By contrast, \texttt{aesop} success is nearly flat across the classical depth buckets. This experiment does not determine whether that depth-insensitive barrier is structural or configuration-specific.

\paragraph{The operational gap is choice-specific.}
\Cref{sec:robustness} showed that the geometric depth gradient generalises across all three of Lean~4's kernel-tracked axioms, with \choice{} the steepest.
The operational gap behaves differently.
We sample $60$ theorems each at \texttt{propext}-distance $2$ and \texttt{Quot.sound}-distance $2$, restricted to \choice{}-unreached proofs, and run them through the same pipeline as above.
\Cref{tab:axis_prover} reports the result.
\texttt{aesop} solves $11.7\%$ of the \texttt{propext} sample and $13.3\%$ of the \texttt{Quot.sound} sample.
Both rates sit far above the classical $1.5\%$ (Fisher's exact $p = 1.7 \times 10^{-3}$ and $p = 4.8 \times 10^{-4}$), while neither differs significantly from the constructive $20.0\%$ in this sample ($p = 0.29$ and $p = 0.44$).
Proofs that depend shallowly on \texttt{propext} or \texttt{Quot.sound} but not on \choice{} look operationally constructive to \texttt{aesop}.

The two results together establish an asymmetry: all three kernel-tracked axioms produce geometric depth gradients (with \choice{} the steepest), but the $13\times$ operational gap is \choice{}-specific.
Among Lean~4's three kernel-tracked axioms, dependence on \choice{} is the one that both reshapes proof methodology measurably and is associated with the large \texttt{aesop} gap observed here.

\subsection{The Anomaly Score Predicts Aesop Failure}
\label{sec:operational:anomaly}

Beyond the binary gap, we ask whether the encoder's anomaly score
carries information about \texttt{aesop} difficulty that is not already
captured by simple proof-length statistics.  The outcome in this
subsection is \texttt{aesop}-alone failure; the neural-guided hybrid is
analysed separately in \Cref{sec:operational:hybrid}.  We fit a
$5$-fold cross-validated logistic regression predicting failure
($y = 1 - \mathrm{success}$) from log proof length and compare it to a
regression that adds the $k$-NN anomaly score from \Cref{sec:law} as a
second predictor.
For the \texttt{aesop}-alone outcome, length alone reaches a median
bootstrap AUC of $0.766$, with $95\%$ percentile interval
$[0.675, 0.837]$ over $1000$ resamples.  Adding the anomaly score raises
the median to $0.841$ ($[0.706, 0.941]$) for a paired improvement of
$+0.071$, with bootstrap interval $[+0.003, +0.179]$ and
$\mathrm{P}(\Delta > 0) = 0.978$.  The interval lies almost entirely
above zero, so the geometric signal contributes predictive information
about \texttt{aesop} failure beyond what length alone explains.  We
interpret this conservatively.  Within-group analyses are underpowered,
with only $3$ \texttt{aesop} successes among $201$ classical theorems
and $10$ among $50$ constructive theorems, and the within-class
bootstrap intervals both cover zero.  We can therefore claim that the
anomaly score carries operationally relevant information at the
full-sample level for \texttt{aesop}, but we cannot confirm from this
sample that it predicts difficulty within either class separately.

\subsection{Neural-Guided Search Compresses but Does Not Close the Gap}
\label{sec:operational:hybrid}

\texttt{aesop}'s rule set is fixed at compile time, and its failure on classical theorems might reflect a configuration choice rather than a deeper property of the proof distribution.
We test this with neural-guided search, using the ReProver byT5-small tactic generator of \citet{yang2023leandojo}.
For each of the same $251$ theorems we read the initial proof state from LeanDojo's traced corpus, beam-search the top-$8$ candidate tactics, splice each candidate followed by \texttt{all\_goals aesop} into the source, and early-exit on the first candidate that closes the proof under the same $60$-second timeout.
In effect, ReProver proposes a first move and \texttt{aesop} closes the remaining subgoals.
\Cref{tab:prover} reports the result.

\paragraph{Result.}
The hybrid solves $22.0\%$ of constructive theorems and $4.5\%$ of classical ones (Fisher's exact odds ratio $6.02$, $p = 2.9 \times 10^{-4}$).
Neural guidance helps both sides but the classical side proportionally more, compressing the gap from the $16.5\times$ odds ratio of \texttt{aesop} alone to the $6.0\times$ odds ratio of the hybrid.
The compression is substantial.
The gap remains.

\paragraph{The neural contribution is theorem-specific.}
A natural objection is that ReProver might be proposing generic opening moves that aid proof search regardless of the specific theorem.
We test this with a shuffled-tactics control that runs the hybrid using the top-$8$ tactics ReProver generated for a \emph{different} theorem in the same bucket (seed $42$, no self-pairing).
The shuffled hybrid solves zero classical theorems against the real hybrid's nine, and only $5$ constructive theorems against the real hybrid's $11$.
Generic wrong first tactics not only fail to help classical proofs but actively degrade proof search on constructive ones.
All nine classical hybrid wins occur only with the correct theorem's candidates in this control.
On constructive theorems the neural contribution splits into generic-opener effects ($4$ shared wins) and theorem-specific guidance ($7$ additional wins).

\paragraph{Paired overlap.}
The paired outcomes strongly favour the hybrid.
Across all $251$ theorems, the hybrid solves $8$ theorems that \texttt{aesop} does not and loses only $1$ (a constructive theorem) that \texttt{aesop} did solve.
All $6$ hybrid-only classical wins are genuinely new theorems for the prover stack.
Per-bucket McNemar $p$-values are not individually significant at these small discordant counts, but the direction is uniform across buckets (\Cref{tab:aesop_vs_hybrid_overlap}).

\begin{table}[ht]
\centering
\caption{Paired \texttt{aesop}--hybrid outcomes. ``both'': both proved it. ``aesop only'': \texttt{aesop} proved, hybrid did not. ``hybrid only'': vice versa. ``neither'': both failed. McNemar $p$ uses Fisher's exact on the two discordant cells.}
\label{tab:aesop_vs_hybrid_overlap}
\small
\begin{tabular}{lrrrrrr}
\toprule
\textbf{Bucket} & \textbf{both} & \textbf{aesop only} & \textbf{hybrid only} & \textbf{neither} & \textbf{n} & \textbf{McN $p$} \\
\midrule
Constructive  & $9$ & $1$ & $2$ & $38$ & $50$ & $1.00$ \\
Depth $2$     & $0$ & $0$ & $2$ & $45$ & $47$ & $0.50$ \\
Depth $3{-}4$ & $2$ & $0$ & $1$ & $52$ & $55$ & $1.00$ \\
Depth $5{-}6$ & $0$ & $0$ & $2$ & $49$ & $51$ & $0.50$ \\
Depth $7{+}$  & $1$ & $0$ & $1$ & $46$ & $48$ & $1.00$ \\
\midrule
\emph{classical total} & $3$ & $0$ & $6$ & $192$ & $201$ & --- \\
\bottomrule
\end{tabular}
\end{table}

\subsection{Scope of the Operational Claim}
\label{sec:operational:scope}

The classical-versus-constructive gap is real and significant under \texttt{aesop} alone, under the neural-guided hybrid, and across multiple statistical tests.
The gap is \choice{}-specific among Lean~4's kernel-tracked axioms.
We established this on $251$ held-out theorems with a standard prover stack.

One natural follow-up is whether the gap reduces to \texttt{aesop}'s default configuration.
A reasonable hypothesis is that classical theorems are unreachable because \texttt{aesop}'s rule set does not have classical reasoning in scope by default, and that prefixing each proof with the \texttt{classical} tactic would close most of the gap.
We test this by replacing \texttt{by aesop} with \texttt{by classical; aesop} on the same $251$ theorems and re-running the pipeline.
The ablation is a clean null: across all five buckets, zero theorems change outcome and McNemar $p = 1.0$.

Before reading the null as evidence that the operational gap is structural in proof space, we ran a positive control on eight hand-written Lean~4 goals where classical reasoning is expected to matter, including $P \lor \neg P$ and Peirce's law.
On zero of the eight does \texttt{classical; aesop} succeed where \texttt{aesop} alone fails.
The same goals are closable by \texttt{tauto}, and $P \lor \neg P$ becomes closable by \texttt{aesop} when \texttt{Classical.em} is supplied as a hypothesis explicitly, so the pipeline is working.
The \texttt{classical} tactic simply does not expose classical primitives into \texttt{aesop}'s search rule set in our Lean~4 and Mathlib configuration.
We therefore cannot distinguish from this experiment a structural proof-space barrier from an interaction with \texttt{aesop}'s default rule-set configuration.
The geometric finding of \Cref{sec:law} stands independently of either interpretation, and the operational gap stands as a measurement.
Under the tested prover stacks, success is strongly associated with \choice{} dependence at the magnitude reported.

\begin{table}[ht]
\centering
\caption{Classical-prefix ablation, paired $251$ theorems. ``Rescue'' = failed
under \texttt{aesop}, succeeded under \texttt{classical; aesop}. ``Lost'' =
succeeded under \texttt{aesop}, failed under \texttt{classical; aesop}. McNemar
$p$ is Fisher's exact on the discordant cells $(b_{01}, b_{10})$.}
\label{tab:classical_ablation}
\small
\begin{tabular}{lrrrrrc}
\toprule
\textbf{Bucket} & \textbf{n} & \textbf{aesop} & \textbf{cls;aes} & \textbf{rescue} & \textbf{lost} & \textbf{McNemar $p$} \\
\midrule
Constructive  & $50$ & $20.0\%$ & $20.0\%$ & $0$ & $0$ & $1.00$ \\
Depth $2$     & $47$ & $0.0\%$  & $0.0\%$  & $0$ & $0$ & $1.00$ \\
Depth $3{-}4$ & $55$ & $3.6\%$  & $3.6\%$  & $0$ & $0$ & $1.00$ \\
Depth $5{-}6$ & $51$ & $0.0\%$  & $0.0\%$  & $0$ & $0$ & $1.00$ \\
Depth $7{+}$  & $48$ & $2.1\%$  & $2.1\%$  & $0$ & $0$ & $1.00$ \\
\bottomrule
\end{tabular}
\end{table}

\end{document}